\documentclass{article}
\pdfoutput=1

     \PassOptionsToPackage{numbers, compress}{natbib}


     \usepackage[preprint]{neurips_2021}



\usepackage[utf8]{inputenc} 
\usepackage[T1]{fontenc}    
\usepackage{url}            
\usepackage{booktabs}       
\usepackage{amsfonts}       
\usepackage{nicefrac}       
\usepackage{microtype}      
\usepackage{xcolor}         
\usepackage{hyperref}
\usepackage{graphicx}
 \graphicspath{{figures/}} 
\usepackage{float}
\newfloat{figtab}{htb}{fgtb}
\makeatletter
  \newcommand\figcaption{\def\@captype{figure}\caption}
  \newcommand\tabcaption{\def\@captype{table}\caption}
\makeatother
\usepackage{bbding}
\usepackage{amsmath}
\usepackage{amssymb}
\usepackage{amsthm}
\usepackage{subfigure}
\usepackage{wrapfig}        
\usepackage{algorithm}
\usepackage{algorithmic}
\newtheorem{theorem}{Theorem}
\newtheorem{lemma}{Lemma}

\newtheorem{definition}{Definition}

\title{Efficient Continuous Control with Double Actors and Regularized Critics}

%

\author{%
  Jiafei Lyu$^{1}$, Xiaoteng Ma$^{2}$, Jiangpeng Yan$^{2}$, Xiu Li$^{1}$ \\
  $^{1}$Tsinghua Shenzhen International Graduate School, Tsinghua University\\
  $^{2}$Department of Automation, Tsinghua Unversity \\
  \texttt{\{lvjf20,ma-xt17,yanjp17\}@mails.tsinghua.edu.cn, li.xiu@sz.tsinghua.edu.cn} \\
  %
}

\begin{document}

\maketitle

\begin{abstract}
  How to obtain good value estimation is one of the key problems in Reinforcement Learning (RL). Current value estimation methods, such as DDPG and TD3, suffer from unnecessary over- or underestimation bias. In this paper, we explore the potential of double actors, which has been neglected for a long time, for better value function estimation in continuous setting. First, we uncover and demonstrate the bias alleviation property of double actors by building double actors upon single critic and double critics to handle overestimation bias in DDPG and underestimation bias in TD3 respectively. Next, we interestingly find that double actors help improve the exploration ability of the agent. Finally, to mitigate the uncertainty of value estimate from double critics, we further propose to regularize the critic networks under double actors architecture, which gives rise to Double Actors Regularized Critics (DARC) algorithm. Extensive experimental results on challenging continuous control tasks show that DARC significantly outperforms state-of-the-art methods with higher sample efficiency.
\end{abstract}

\section{Introduction}

Actor-Critic methods \cite{konda2000actor, prokhorov1997adaptive} are among the most popular methods in Reinforcement Learning (RL) \cite{sutton2018reinforcement} which involves value function estimation \cite{baird1995residual, gordon1995stable} and policy gradients \cite{weng2018PG, williams1992simple}. Deep Deterministic Policy Gradient (DDPG) \cite{lillicrap2015continuous} is a typical and widely-used RL algorithm for continuous control that is built upon actor-critic method. It has been revealed that DDPG results in severe \emph{overestimation} bias \cite{fujimoto2018addressing} using single critic for function approximation \cite{thrun1993issues} as the actor network is trained to produce the action with the highest value estimate. Such phenomenon is widely reported in the methods based on Q-learning \cite{hasselt2010double, pandey2010approximate, watkins1989learning}, which does harm to their performance. Fujimoto et al. \cite{fujimoto2018addressing} borrow ideas from Double Q-learning algorithm \cite{hasselt2010double, hasselt2016deep} and propose Twin Delayed Deep Deterministic Policy Gradient (TD3) to address the issue in DDPG. TD3 utilizes the minimum value estimate from double critic networks for value estimation. With the aid of target networks, delayed policy update as well as clipped double Q-learning, TD3 alleviates the overestimation bias problem and significantly improves the performance of DDPG. However, it turns out that TD3 may lead to large \emph{underestimation} bias \cite{ciosek2019better, pan2020softmax}, which negatively affects its performance.

In this paper, we explore the advantages of double actors for value estimation correction and how they benefit continuous control, which has been neglected for a long time. We show that double actors help relieve overestimation bias in DDPG if built upon single critic, and underestimation bias in TD3 if built upon double critics, where we develop Double Actors DDPG (DADDPG) algorithm and Double Actors TD3 (DATD3) algorithm respectively. We experimentally find out that DADDPG significantly outperforms DDPG, which sheds light on the potential advantages of double actors. 

Besides the benefit of double actors in value estimation, we also uncover an important property of double actors, i.e., they enhance the exploration ability of the agent. Double actors offer double paths for policy optimization instead of making the agent confined by a single policy, which significantly reduces the probability of the actor being trapped locally. Furthermore, the second critic is kind of wasted in TD3 as it is only used for value correction and the actor relies on the first critic for updating and action execution. Double actors architecture can fully utilize critics, where each critic is in charge of one actor (see details in Appendix \ref{fig:graphdifference}). 

To alleviate the uncertainty of value estimate from double critics, we propose to regularize the critics, where a convex combination of value estimates from double actors is used for better flexibility, leading to the Double Actors Regularized Critics (DARC) algorithm. We give an upper bound of the error between the optimal value and the value function with double actors, showing that the error can be efficiently controlled with critic regularization. For better illustration on benefits of applying double actors, we compare components of different algorithms and report the average performance improvement on MuJoCo \cite{todorov2012mujoco} environments (see Table \ref{tab:structurecomparison} for details).

We perform extensive experiments on two challenging continuous control benchmarks, OpenAI Gym \cite{brockman2016openai} and PyBullet Gym \cite{benelot2018}, where we compare our DARC algorithm against the current state-of-the-art methods including TD3 and Soft Actor-Critic (SAC) \cite{haarnoja2018soft, haarnoja2018softactorcritic}. The results show that DARC significantly outperforms them with much higher sample efficiency.

\begin{table}
    \tabcaption{Algorithmic component comparison and average performance improvement compared to DDPG baseline. The improvement refers to the averaged relative improvement on the mean final scores of MuJoCo environments with respect to DDPG baseline over 5 runs.}
    \label{tab:structurecomparison}
    \small
    \centering
    \setlength\tabcolsep{3pt}
    \begin{tabular}{ccccccc}
      \toprule
    Algorithms  & Double Actors & Double Critics  & Value Correction & Regularization & Improvement \\
    \midrule
    DDPG\cite{lillicrap2015continuous}  & \XSolidBrush & \XSolidBrush & \XSolidBrush & \XSolidBrush & 100\% \\
    TD3\cite{fujimoto2018addressing}   & \XSolidBrush & \CheckmarkBold & \CheckmarkBold & \XSolidBrush & 245\% \\
    SAC\cite{haarnoja2018soft}   & \XSolidBrush & \CheckmarkBold & \CheckmarkBold & \CheckmarkBold &  191\% \\
    DADDPG (this work) & \CheckmarkBold & \XSolidBrush  & \CheckmarkBold & \XSolidBrush & 167\% \\
    DATD3 (this work) & \CheckmarkBold & \CheckmarkBold & \CheckmarkBold & \XSolidBrush & 291\% \\
    DARC (this work) & \CheckmarkBold & \CheckmarkBold & \CheckmarkBold & \CheckmarkBold & \textbf{331\%} \\
\bottomrule
        \end{tabular}
\end{table}

\section{Preliminaries}
\label{sec:pre}

Reinforcement learning studies sequential decision making problems and it can be formulated by a Markov Decision Process (MDP), which can be defined as a 5-tuple $\langle \mathcal{S},\mathcal{A},p,r,\gamma\rangle$ where $\mathcal{S},\mathcal{A}$ denote state space and action space respectively, $p$ denotes transition probability, $r:\mathcal{S}\times\mathcal{A}\mapsto\mathbb{R}$ is the reward function and $\gamma \in[0,1)$ is the discount factor. The agent behaves according to the policy $\pi_\phi:\mathcal{S}\mapsto\mathcal{A}$ parameterized by $\phi$. The objective function of reinforcement learning can be written as $J(\phi) = \mathbb{E}_s[\sum_{t=0}^\infty \gamma^t r(s_t,a_t)|s_0,a_0;\pi_{\phi}(s)]$, which aims at maximizing expected future discounted rewards following policy $\pi_{\phi}(s)$.

We consider continuous control scenario with bounded action space and we further assume a continuous and bounded reward function $r$. Then the policy $\pi_\phi$ can be improved by conducting policy gradient ascending in terms of the objective function $J(\phi)$. The Deterministic Policy Gradient (DPG) Theorem \cite{silver2014deterministic} offers a practical way of calculating the gradient:
\begin{equation}
\label{eq:dpg}
    \nabla_\phi J(\phi) = \mathbb{E}_s [\nabla_\phi \pi_{\phi}(s) \nabla_a Q_{\theta}(s,a)|_{a=\pi_{\phi}(s)}],
\end{equation}
where $Q_{\theta}(s,a)$ is the $Q$-function with parameter $\theta$ that approximates the long-term rewards given state and action. In actor-critic architecture, the critic estimates value function $Q_{\theta}(s,a)$ to approximate the true parameter $\theta^{\mathrm{true}}$, and the actor is updated using Eq. (\ref{eq:dpg}). DDPG learns a deterministic policy $\pi_{\phi}(s)$ to approximate the optimal policy as it is expensive to directly apply the max operator over the continuous action space $\mathcal{A}$. With TD-learning, the critic in DDPG is updated via $\tilde{\theta} \leftarrow \theta + \eta \mathbb{E}_{s,a\sim \rho}[r+\gamma Q_{\theta^\prime}(s^\prime,\pi_{\phi^\prime}(s^\prime)) - Q_{\theta}(s,a)]\nabla_\theta Q_{\theta}(s,a)$, where $\eta$ is the learning rate, $\rho$ is the sample distribution in the replay buffer, $\phi^\prime$ and $\theta^\prime$ are the parameters of the target actor and critic network respectively. TD3 addresses the overestimation problem in DDPG by employing double critics for value estimation, which is given by $\hat{Q}(s^\prime,a^\prime) \leftarrow \min_{i=1,2} Q_{\theta^\prime_i}(s^\prime,a^\prime)$, and one actor for policy improvement. Denote $\mathcal{T}(s^\prime)$ as the value estimation function that is utilized to estimate the target value $r+\gamma\mathcal{T}(s^\prime)$, and then we have $\mathcal{T}_{\mathrm{DDPG}}(s^\prime)=Q_{\theta^\prime}(s^\prime,\pi_{\phi^\prime}(s^\prime))$ and $\mathcal{T}_{\mathrm{TD3}}(s^\prime)=\min_{i=1,2} Q_{\theta^\prime_i}(s^\prime,\pi_{\phi^\prime}(s^\prime))$. 

\section{Double Actors for Better Continuous Control}

In this section, we discuss how to better estimate value function with double actors and also demonstrate that double actors can help ease overestimation bias in DDPG as well as the underestimation bias in TD3. We further show that double actors could induce the stronger exploration capability in continuous control settings.

\subsection{Better Value Estimation with Double Actors}
\label{sec:doubleactorvaluecorrection}
It is vital for any actor-critic-style algorithm to estimate the value function in a good manner such that the actor can be driven to learn a better policy. However, existing methods are far from satisfying.

\noindent\textbf{How to use double actors for value estimation correction?} If we build double actors upon single Q-network (the critic) parameterized by $\theta$, then for each training step, the critic has two paths to choose from: either following $\pi_{\phi_1^\prime}(s)$ or $\pi_{\phi_2^\prime}(s)$, where both paths are estimated to approximate the optimal path for the task. We then propose to estimate the value function via
\begin{equation}
\label{eq:daddpg}
    \hat{V}(s^\prime) \leftarrow \min_{i=1,2} Q_{\theta^\prime}(s^\prime, \pi_{\phi_i^\prime}(s^\prime)),
\end{equation}
where $\theta^\prime,\phi_i^\prime,i\in\{1,2\}$ are the parameters of the target networks, which leads to \emph{Double Actors DDPG} (DADDPG) algorithm (see Appendix \ref{sec:doubleactoronecritic} for detailed algorithm and its comparison to TD3).  While if we build double actors upon double critics networks parameterized by $\theta_1^\prime$, $\theta_2^\prime$ respectively, one naive way to estimate the value function would be taking minimum of Q-networks for each policy $\pi_{\phi_i^\prime}(s), i=1,2$, and employing the maximal one for final value estimation, i.e.,
\begin{equation}
\label{eq:datd3}
    \hat{V}(s^\prime) \leftarrow \max \left\{ \min_{i=1,2}\left(Q_{\theta_i^\prime}(s^\prime,\pi_{\phi_1^\prime}(s^\prime))\right), \min_{j=1,2}\left(Q_{\theta_j^\prime}(s^\prime,\pi_{\phi_2^\prime}(s^\prime))\right) \right\},
\end{equation}
which we refer as \emph{Double Actors TD3} (DATD3) algorithm (see Appendix \ref{sec:doubleactorsdoublecritics} for more details). Eq. (\ref{eq:daddpg}) and Eq. (\ref{eq:datd3}) provide us with a novel way of estimating value function upon single and double critics. One may wonder why we adopt value estimation using Eq. (\ref{eq:datd3}) instead of taking maximal value for each critic under the two policies and employing smaller estimation for target value update, i.e., $\hat{V}(s^\prime) \leftarrow \min \left\{ \max_{i=1,2}\left(Q_{\theta_1}(s^\prime,\pi_{\phi_i^\prime}(s^\prime))\right), \max_{j=1,2}\left(Q_{\theta_2}(s^\prime,\pi_{\phi_j^\prime}(s^\prime))\right) \right\}$. However, if we use double actors for value estimation correction firstly, the actors would have no capability of correcting the value estimation in a good manner as the critic itself may fail to predict the return accurately. Actors are guided and driven by its counterpart critic and the actor can only be as good as allowed by the critic. Moreover, such update scheme may induce large overestimation bias by taking maximum as DDPG. It is worth noting that our method is different from the double Q-learning algorithm as we adopt double actors for value estimation correction instead of constructing double target values for individual update of actor-critic pair. 

\noindent\textbf{What benefits can value estimation with double actors bring?} We demonstrate that double actors help mitigate the severe overestimation problem in DDPG and the underestimation bias in TD3. By the definition in Section \ref{sec:pre}, we have $\mathcal{T}_{\mathrm{DADDPG}}(s^\prime) = \min_{i=1,2} Q_{\theta^\prime}(s^\prime, \pi_{\phi_i^\prime}(s^\prime)), \mathcal{T}_{\mathrm{DATD3}}(s^\prime) = \max \left\{ Q_1(s^\prime,\pi_{\phi_1^\prime}(s^\prime)), Q_2(s^\prime,\pi_{\phi_2^\prime}(s^\prime)) \right\}$, where $Q_1(s^\prime,\pi_{\phi_1^\prime}(s^\prime)) = \min_{i=1,2}\left\{Q_{\theta_i^\prime}(s^\prime,\pi_{\phi_1^\prime}(s^\prime))\right\}$, $Q_2(s^\prime,\pi_{\phi_2^\prime}(s^\prime)) = \min_{k=1,2}\left\{Q_{\theta_k^\prime}(s^\prime,\pi_{\phi_2^\prime}(s^\prime))\right\}$ respectively. Then for DADDPG (one-critic-double-actor structure), we show in Theorem \ref{theo:daddpg} that double actors effectively alleviate the overestimation bias problem in DDPG, and the proof is deferred to Appendix \ref{sec:theorem1}.

\begin{theorem}
\label{theo:daddpg}
Denote the value estimation bias deviating the true value induced by $\mathcal{T}$ as bias($\mathcal{T}$) = $\mathbb{E}[\mathcal{T}(s^\prime)] - \mathbb{E}[Q_{\theta^{\rm{true}}}(s^\prime,\pi_{\phi^\prime}(s^\prime))]$, then we have \rm{bias}($\mathcal{T}_{\mathrm{DADDPG}}$) $\le$ \rm{bias}($\mathcal{T}_{\mathrm{DDPG}}$).
\end{theorem}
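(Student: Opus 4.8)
The plan is to peel the true value off both sides and reduce the claim to a pure comparison of the two target estimators, then finish with the monotonicity of $\min$. First I would write out both biases from the definition and observe that they share the identical subtracted reference term $\mathbb{E}[Q_{\theta^{\mathrm{true}}}(s',\pi_{\phi'}(s'))]$. Hence $\mathrm{bias}(\mathcal{T}_{\mathrm{DADDPG}}) \le \mathrm{bias}(\mathcal{T}_{\mathrm{DDPG}})$ is equivalent to the estimator-side inequality
\[
\mathbb{E}\left[\min_{i=1,2} Q_{\theta'}(s',\pi_{\phi_i'}(s'))\right] \le \mathbb{E}\left[Q_{\theta'}(s',\pi_{\phi'}(s'))\right],
\]
which removes the true value entirely and isolates exactly what must be shown. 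Note that because this is just an inequality between expectations, I do not even need to invoke the sign of the DDPG bias here; the overestimation story is motivation, while the inequality itself comes for free from $\min$ reduction.

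Second I would establish this estimator-side inequality. The elementary pointwise fact $\min_{i=1,2} Q_{\theta'}(s',\pi_{\phi_i'}(s')) \le Q_{\theta'}(s',\pi_{\phi_j'}(s'))$ holds for each $j\in\{1,2\}$, so the only real content is to connect the single DDPG actor $\pi_{\phi'}$ to the two DADDPG actors $\pi_{\phi_1'},\pi_{\phi_2'}$. Since all three actors are trained against the same critic $Q_{\theta'}$ by the same deterministic policy gradient rule, I would model the scalar estimates $Q_{\theta'}(s',\pi_{\phi'}(s'))$, $Q_{\theta'}(s',\pi_{\phi_1'}(s'))$ and $Q_{\theta'}(s',\pi_{\phi_2'}(s'))$ as identically distributed over the state sampling and training randomness (in fact only equal \emph{means} are needed). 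Writing $X_1 = Q_{\theta'}(s',\pi_{\phi_1'}(s'))$ and $X_2 = Q_{\theta'}(s',\pi_{\phi_2'}(s'))$, the pointwise bound $\min(X_1,X_2) \le X_1$ together with monotonicity of expectation gives $\mathbb{E}[\min(X_1,X_2)] \le \mathbb{E}[X_1] = \mathbb{E}[Q_{\theta'}(s',\pi_{\phi'}(s'))]$, which is precisely the required inequality. No independence between $X_1$ and $X_2$ is needed.

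The main obstacle I anticipate is exactly this identical-distribution step: the two DADDPG actors generally do not output the same action as the DDPG actor, so justifying that their induced value estimates share a common marginal (rather than being the $\min$ of two \emph{arbitrary} estimates) is the crux. I would defend it through the symmetry of the training procedure, since the two actors play interchangeable roles and are driven by the same initialization and update rule; and, as a weaker but fully rigorous fallback, by identifying the reference actor $\phi'$ with one of the double actors, say $\phi_1'$, in which case $\min_{i=1,2} Q_{\theta'}(s',\pi_{\phi_i'}(s')) \le Q_{\theta'}(s',\pi_{\phi_1'}(s')) = \mathcal{T}_{\mathrm{DDPG}}(s')$ holds deterministically and the expectation inequality is immediate. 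Finally I would close with the interpretation: because DDPG is known to overestimate so that $\mathrm{bias}(\mathcal{T}_{\mathrm{DDPG}}) \ge 0$, and the gap $\mathbb{E}[\min(X_1,X_2)] < \mathbb{E}[X_1]$ is strict whenever the two estimates are not almost surely equal, the $\min$ genuinely pulls the target estimate down toward the true value and the overestimation is alleviated rather than merely bounded.
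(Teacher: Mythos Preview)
Your proposal is correct and ultimately coincides with the paper's argument: the paper simply identifies the DDPG actor with $\pi_{\phi_1'}$ (what you call the ``fallback''), obtains the pointwise inequality $\mathcal{T}_{\mathrm{DADDPG}}(s') = \min\{Q_{\theta'}(s',\pi_{\phi_1'}(s')),Q_{\theta'}(s',\pi_{\phi_2'}(s'))\} \le Q_{\theta'}(s',\pi_{\phi_1'}(s')) = \mathcal{T}_{\mathrm{DDPG}}(s')$, and takes expectations. Your identical-distribution detour is unnecessary here; the paper goes straight to the deterministic pointwise bound you list as the fallback.
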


\textbf{Remark}: This theorem sheds light to the value estimation correction with double actors as it holds without any special requirement or assumption on double actors, which indicates that DADDPG naturally eases the overestimation bias in DDPG. TD3 also alleviates the overestimation issues while it leverages double critic networks for value correction, which differs from that of DADDPG.

Similarly, we present the relationship of the value estimation of DATD3 (double-actor-double-critic structure) and TD3 in Theorem \ref{theo:datd3}, where the proof is available in Appendix \ref{sec:theorem2}.

\begin{theorem}
\label{theo:datd3}
The bias of DATD3 is larger than that of TD3, i.e., \rm{bias}($\mathcal{T}_{\mathrm{DATD3}}$) $\ge$ \rm{bias}($\mathcal{T}_{\mathrm{TD3}}$).
\end{theorem}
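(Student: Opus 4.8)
The plan is to reduce the bias inequality to a comparison of the two target expectations, and then close it with an elementary maximization argument. Following the definition of $\mathrm{bias}(\cdot)$, I first note that the reference term $\mathbb{E}[Q_{\theta^{\mathrm{true}}}(s^\prime,\pi_{\phi^\prime}(s^\prime))]$ is shared by both estimators, because TD3 and DATD3 sit on the same double-critic backbone and all actors in play ($\pi_{\phi^\prime}$ for TD3, $\pi_{\phi_1^\prime},\pi_{\phi_2^\prime}$ for DATD3) are trained to approximate the same optimal policy. Treating these near-optimal actors as interchangeable in distribution, as in the analysis underlying Theorem~\ref{theo:daddpg}, the common reference cancels in the difference $\mathrm{bias}(\mathcal{T}_{\mathrm{DATD3}})-\mathrm{bias}(\mathcal{T}_{\mathrm{TD3}})$, so it suffices to prove $\mathbb{E}[\mathcal{T}_{\mathrm{DATD3}}(s^\prime)]\ge\mathbb{E}[\mathcal{T}_{\mathrm{TD3}}(s^\prime)]$.

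Next I would unpack the two targets. TD3 uses the single min-over-critics estimate $\mathcal{T}_{\mathrm{TD3}}(s^\prime)=\min_{i=1,2}Q_{\theta_i^\prime}(s^\prime,\pi_{\phi^\prime}(s^\prime))$ under one actor. For DATD3, writing $X:=\min_{i}Q_{\theta_i^\prime}(s^\prime,\pi_{\phi_1^\prime}(s^\prime))$ and $Y:=\min_{k}Q_{\theta_k^\prime}(s^\prime,\pi_{\phi_2^\prime}(s^\prime))$, we have $\mathcal{T}_{\mathrm{DATD3}}(s^\prime)=\max\{X,Y\}$. The key structural observation is that each of $X$ and $Y$ is itself exactly a TD3-style target, only evaluated at a different actor; hence under the interchangeability assumption $\mathbb{E}[X]=\mathbb{E}[Y]=\mathbb{E}[\mathcal{T}_{\mathrm{TD3}}(s^\prime)]$.

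The heart of the argument is then the pointwise inequalities $\max\{X,Y\}\ge X$ and $\max\{X,Y\}\ge Y$, which upon taking expectations give $\mathbb{E}[\max\{X,Y\}]\ge\max\{\mathbb{E}[X],\mathbb{E}[Y]\}=\mathbb{E}[\mathcal{T}_{\mathrm{TD3}}(s^\prime)]$. This Jensen-type step captures the mechanism of the theorem precisely: the outer maximization over the two actors lifts DATD3's otherwise underestimated target above TD3's, shrinking the negative underestimation bias toward zero and yielding $\mathrm{bias}(\mathcal{T}_{\mathrm{DATD3}})\ge\mathrm{bias}(\mathcal{T}_{\mathrm{TD3}})$. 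Note the symmetry with Theorem~\ref{theo:daddpg}: there an inner $\min$ over actors pushes the estimate down to combat overestimation, whereas here an outer $\max$ over actors pushes it up to combat underestimation.

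I expect the main obstacle to be justifying the interchangeability/common-reference step rather than the maximization itself, which is elementary. The variables $X$ and $Y$ are produced by distinct, correlated actors, and in general their min-over-critics expectations need not coincide with that of $\mathcal{T}_{\mathrm{TD3}}$; only in the idealized regime where actors are identically distributed relative to the true value, with critics sharing the same error statistics, does the reference cancel cleanly and do both branches share TD3's mean. I would therefore state the result in that regime, and remark that without it one obtains only the weaker conclusion that DATD3 dominates the larger of the two actor-specific targets, rather than TD3's own target.
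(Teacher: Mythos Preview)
Your core step---the pointwise inequality $\max\{X,Y\}\ge X$ followed by taking expectations---is exactly what the paper does. However, you have introduced an unnecessary detour. You treat TD3's actor $\pi_{\phi'}$ as a third policy distinct from $\pi_{\phi_1'}$ and $\pi_{\phi_2'}$, and then invoke an ``interchangeability in distribution'' hypothesis to argue $\mathbb{E}[X]=\mathbb{E}[\mathcal{T}_{\mathrm{TD3}}(s')]$. The paper avoids this entirely: since DATD3 is architecturally TD3 with one additional actor, it simply \emph{identifies} TD3's actor with $\pi_{\phi_1'}$. Under that identification, $X=\min_{i}Q_{\theta_i'}(s',\pi_{\phi_1'}(s'))$ is literally $\mathcal{T}_{\mathrm{TD3}}(s')$, not merely equal to it in expectation, so $\mathcal{T}_{\mathrm{DATD3}}(s')=\max\{X,Y\}\ge X=\mathcal{T}_{\mathrm{TD3}}(s')$ holds pointwise for every $s'$. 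The bias inequality then follows immediately with no distributional assumptions on the actors, and the ``main obstacle'' you anticipated disappears. Your Jensen-style remark $\mathbb{E}[\max\{X,Y\}]\ge\max\{\mathbb{E}[X],\mathbb{E}[Y]\}$ is correct but stronger than needed; the paper only uses $\max\{X,Y\}\ge X$.
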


Theorem \ref{theo:daddpg} and Theorem \ref{theo:datd3} theoretically ensure the bias alleviation property of double actors, i.e., double actors architecture helps mitigate overestimation bias in DDPG as well as underestimation issues in TD3. To illustrate the bias alleviation effect with double actors, we conduct experiments in a typical MuJoCo \cite{todorov2012mujoco} environment, Walker2d-v2, where the value estimates are calculated by averaging over $1000$ states sampled from the replay buffer each tiemstep and the true value estimations are estimated by rolling out the current policy using the sampled states as the initial states and averaging the discounted long-term rewards. The experimental setting is identical as in Section \ref{sec:experiment} and the result is presented in Fig \ref{fig:biasperformance}. Fig \ref{fig:daddpgwalker2d} shows that DADDPG reduces the overestimation bias in DDPG and significantly outperforms DDPG in sample efficiency and final performance. As is shown in Fig \ref{fig:dacwalker2d}, DATD3 outperforms TD3 and preserves larger bias than TD3, which reveals the effectiveness and advantages of utilizing double actors to correct value estimation. The estimation bias comparison on broader environments can be found in Appendix \ref{sec:biaswithdoubleactors}.

\begin{figure}
    \centering
    \subfigure[Single critic network]{
    \label{fig:daddpgwalker2d}
    \includegraphics[scale=0.3]{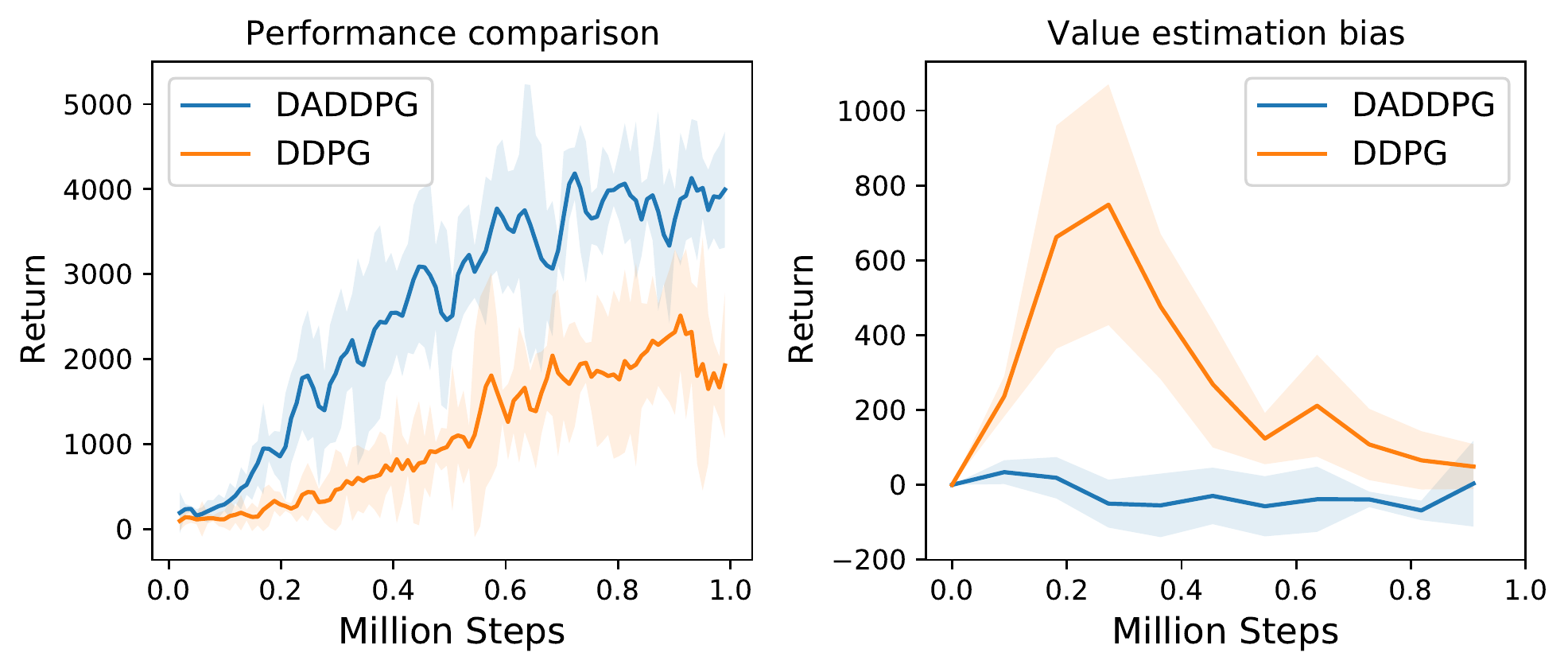}
    }\hspace{-2mm}
    \subfigure[Double critic networks]{
    \label{fig:dacwalker2d}
    \includegraphics[scale=0.3]{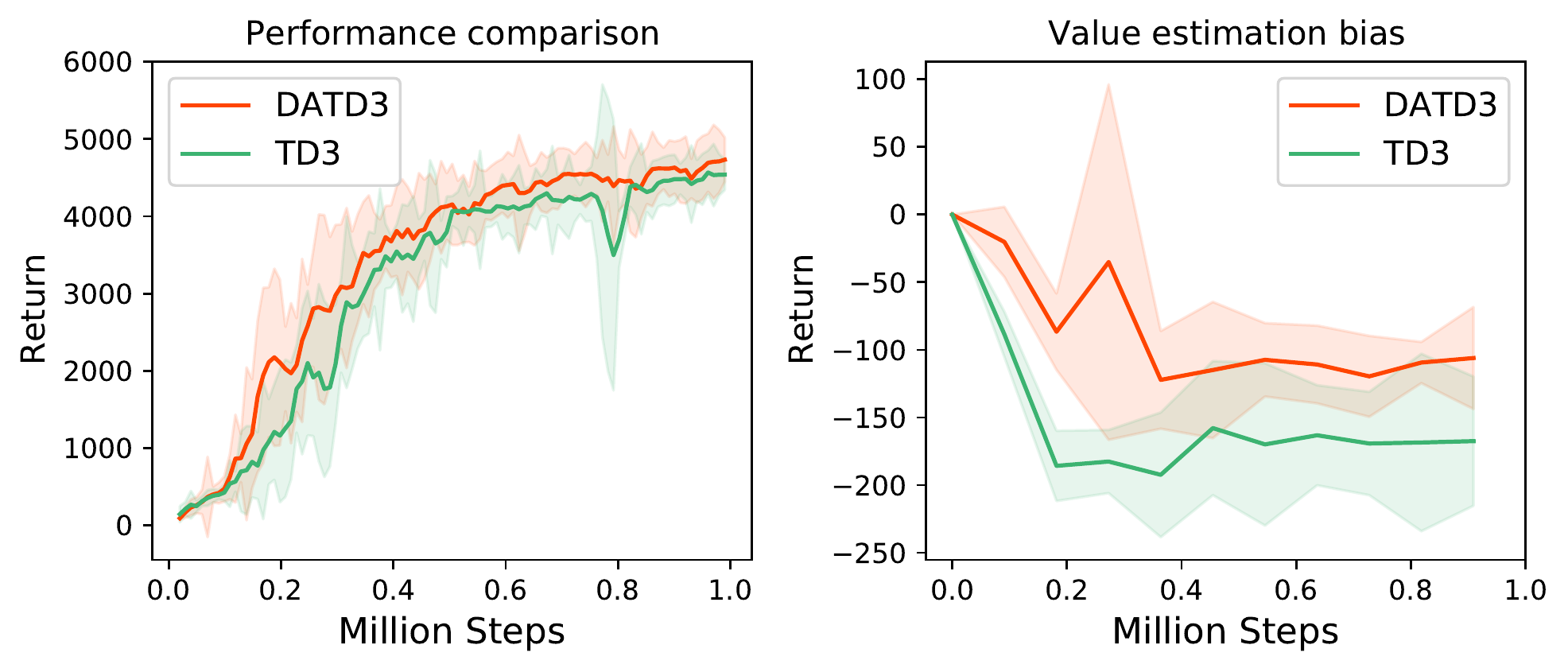}
    }\hspace{-2mm}
    \caption{Comparison of performance and value estimation bias on Walker2d-v2. Double actors help (a) relieve the overestimation bias in DDPG; (b) mitigate the underestimation bias in TD3.}
    \label{fig:biasperformance}
\end{figure}

\subsection{Enhanced Exploration with Double Actors}
\label{sec:exploration}

\begin{wrapfigure}{r}{4cm}
\centering
\label{fig:escapelocal}
\includegraphics[width=0.28\textwidth]{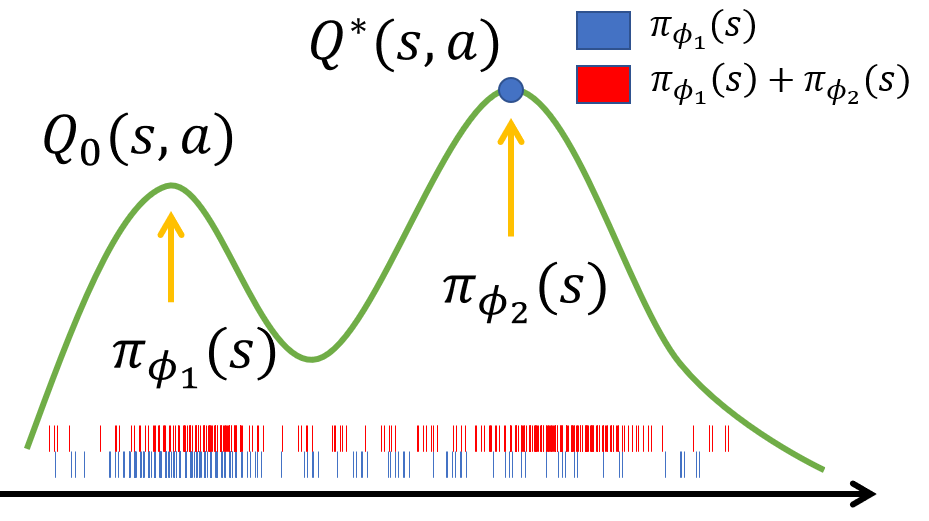}
\caption{Double actors help escape from local optimum.}
\end{wrapfigure}

Double actors allows the agent to evaluate different policies instead of being restricted by single policy path. Single actor $\pi_{\phi_1}(s)$ may make the agent stuck in a local optimum $Q_0(s,a)$ rather than the global optimum $Q^*(s,a)$ due to lack of exploration as is demonstrated in Fig 2. While double actors could enhance the \emph{exploration} ability of the agent by following the policy that results in higher return, i.e., the agent would follow $\pi_{\phi_1}(s)$ if $Q_\theta(s,\pi_{\phi_1}(s))\ge Q_\theta(s,\pi_{\phi_2}(s))$ and $\pi_{\phi_2}(s)$ otherwise. In this way, \emph{double actors can help escape from the local optimum} $Q_0(s,a)$ and reach the global optimum $Q^*(s,a)$.

We illustrate the exploration effect of double actors by designing an 1-dimensional, continuous state and action toy environment GoldMiner in Fig \ref{fig:goldminer} (see Appendix \ref{sec:doubleactorstoy} for detailed environmental setup). There are two gold mines centering at position $x_1=-3,x_2=4$ with neighboring region length to be $1$ where the miner can receive a reward of $+4$ and $+1$ if he digs in the right and left gold mine respectively, and a reward of $0$ if he digs elsewhere. The miner always starts at position $x_0=0$ and could move left or right to dig for gold with actions ranging from $[-1.5,1.5]$. The boundaries for the left and right side are $-4$ and $5$, and the episode length is $200$ steps. We run DDPG and DADDPG on GoldMiner for $40$ independent runs where we omit value correction using double actors to exclude its influence and only use the second actor for exploration in DADDPG for fair comparison. It can be found that DADDPG significantly outperforms DDPG as is shown in Fig \ref{fig:goldminerperformance} where the shaded region denotes one-third a standard deviation for better visibility.

\begin{figure}[htb]
    \centering
    \subfigure[GoldMiner]{
    \label{fig:goldminer}
    \includegraphics[scale=0.25]{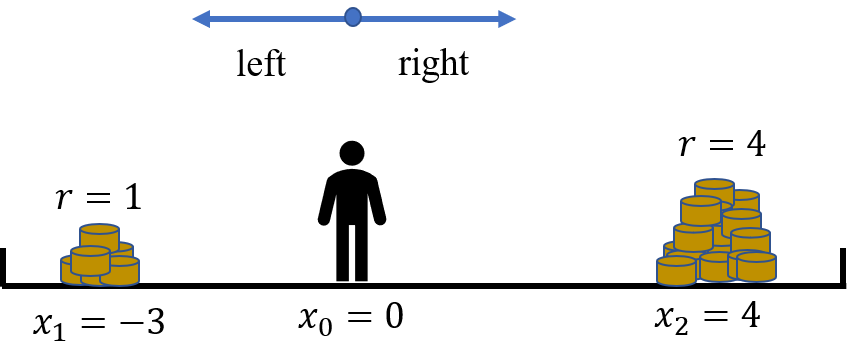}
    }\hspace{-2mm}
    \subfigure[Performance]{
    \label{fig:goldminerperformance}
    \includegraphics[scale=0.3]{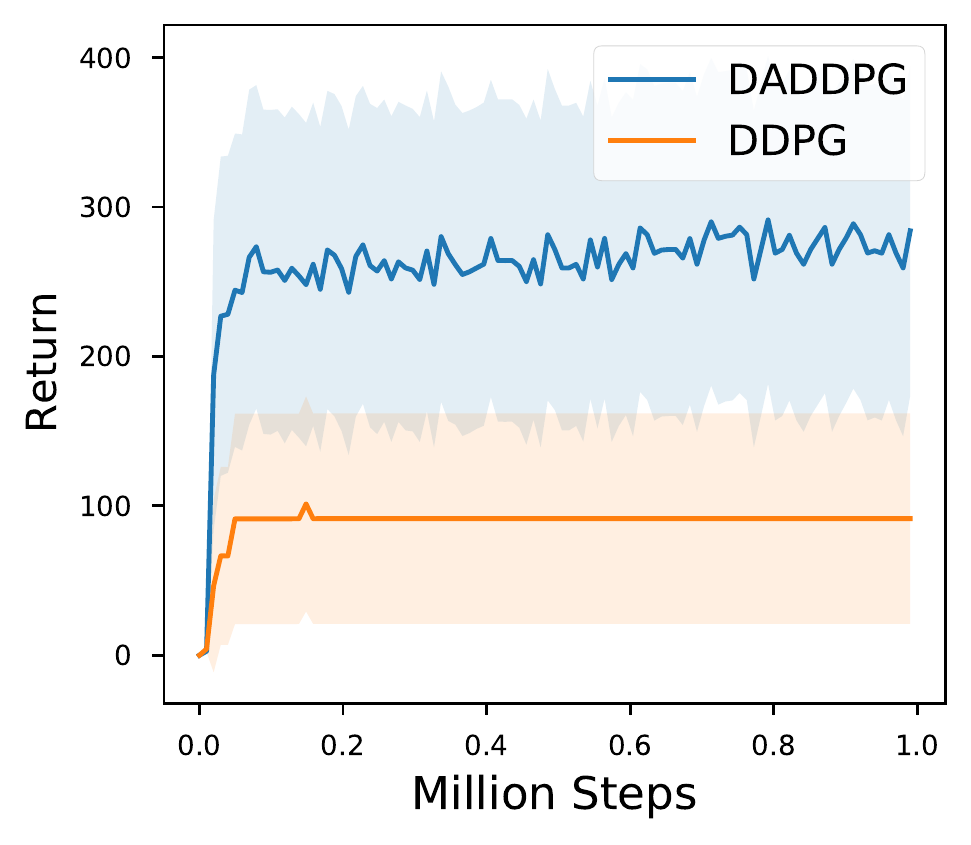}
    }\hspace{-2mm}
    \subfigure[State visit frequency]{
    \label{fig:statevisit}
    \includegraphics[scale=0.25]{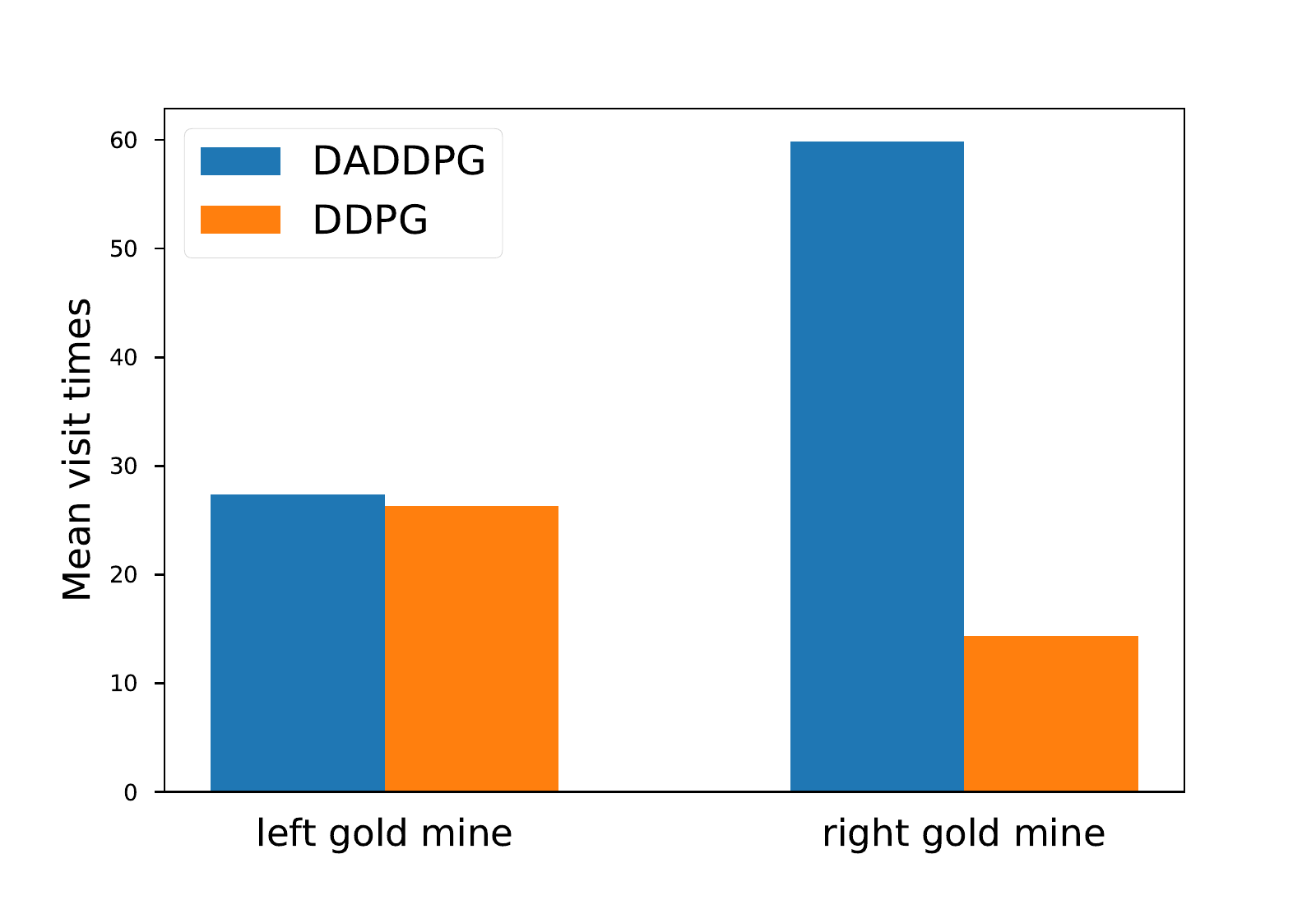}
    }\hspace{-2mm}
    \caption{Exploration ability analysis of double actors on GoldMiner environment.}
    \label{fig:exploration}
\end{figure}

To better understand the effectiveness of double actors, we collect the average high-reward state (where the gold mines lie) visiting times of each method for every episode. As is shown in Fig \ref{fig:statevisit}, the visiting frequency of DADDPG to the right gold mine which has larger reward significantly exceed that of DDPG, indicating that the agent would tend to visit the places that could achieve higher rewards with double actors. Single actor, however, would guide the agent to visit the left mine more frequently as it is closer, i.e., it is stuck in a local optimum. DDPG could hardly learn a policy towards the right gold mine, which is shortsighted and pessimistic (see Fig 2).

\begin{algorithm}[tb]
\caption{Double Actors Regularized Critics (DARC)}
\label{alg:algdarc}
\begin{algorithmic}[1] 
\STATE Initialize critic networks $Q_{\theta_1}, Q_{\theta_2}$ and actor networks $\pi_{\phi_1}, \pi_{\phi_2}$ with random parameters 
\STATE Initialize target networks $\theta_1^\prime \leftarrow \theta_1, \theta_2^\prime \leftarrow \theta_2, \phi_1^\prime \leftarrow \phi_1, \phi_2^\prime \leftarrow \phi_2$ and replay buffer $\mathcal{B} = \{\}$.
\FOR{$t$ = 1 to $T$}
\STATE Select action $a$ with Gaussian exploration noise $\epsilon$ based on $\pi_{\phi_1}$ and $\pi_{\phi_2}$, $\epsilon\sim \mathcal{N}(0,\sigma)$
\STATE Execute action $a$ and observe reward $r$, new state $s^\prime$ and done flag $d$
\STATE Store transitions in the replay buffer, i.e., $\mathcal{B}\leftarrow\mathcal{B}\bigcup \{(s,a,r,s^\prime,d)\}$
\FOR{$i = 1,2$}
\STATE Sample $N$ transitions $\{(s_j,a_j,r_j,s_j^\prime,d_j)\}_{j=1}^N\sim\mathcal{B}$
\STATE $a^\prime\leftarrow \pi_{\phi_1^\prime}(s^\prime) + \epsilon$, $a^{\prime\prime} \leftarrow \pi_{\phi_2^\prime}(s^\prime) + \epsilon$, $\epsilon\sim$ clip($\mathcal{N}(0,\bar{\sigma}),-c,c$)
\STATE $Q_1(s^\prime, a^\prime) \leftarrow \min_{j=1,2}\left(Q_{\theta_j^\prime}(s^\prime, a^\prime)\right)$, $Q_2(s^\prime, a^{\prime\prime}) \leftarrow \min_{k=1,2}\left(Q_{\theta_k^\prime}(s^\prime, a^{\prime\prime}) \right)$
\STATE $\hat{V}(s^\prime)\leftarrow \nu \min \{Q_1(s^\prime, a^\prime), Q_2(s^\prime, a^{\prime\prime})\} + (1-\nu) \max \{Q_1(s^\prime, a^\prime), Q_2(s^\prime, a^{\prime\prime})\}$
\STATE $y_t \leftarrow r + \gamma(1-d) \hat{V}(s^\prime)$
\STATE Update critic $\theta_i$ by minimizing: $\frac{1}{N}\sum_s \left \{ (Q_{\theta_i}(s,a)-y_t)^2 + \lambda [Q_{\theta_1}(s,a) - Q_{\theta_2}(s,a)]^2 \right \}$
\STATE Update actor $\phi_i$ with policy gradient: $ \frac{1}{N}\sum_s \nabla_a Q_{\theta_i}(s,a)|_{a=\pi_{\phi_i}(s)}\nabla_{\phi_i}\pi_{\phi_i}(s)$
\STATE Update target networks: $\theta_i^\prime \leftarrow \tau\theta_i + (1-\tau)\theta_i^\prime, \phi_i^\prime\leftarrow\tau\phi_i+(1-\tau)\phi_i^\prime$
\ENDFOR
\ENDFOR
\end{algorithmic}
\end{algorithm}

Moreover, double actors help relieve the \emph{pessimistic underexploration} phenomenon reported in \cite{ciosek2019better} upon double critics, which is caused by the reliance on pessimistic critics for exploration. This issue can be mitigated naturally with the aid of double actors as only the policy that leads to higher expected return would be executed, which is beneficial for exploration (see Appendix \ref{sec:pessimistic} for more details). With double actors, the exploration capability of the agent is decoupled from value estimation and \emph{the agent could benefit from both pessimistic estimation and optimistic exploration}. To conclude, double actors enable the agent to visit more valuable states and enhance the exploration capability of the agent, which makes the application of double actors in continuous control setting appealing.

\section{Beyond Double Critics: Double Actors with Regularized Critics}
\label{sec:darc}
Though DATD3 is qualified in target value correction, it does not address the core issue of the uncertainty in value estimation from two \emph{independent} critics. DATD3 may incur slight overestimation bias following Eq. (\ref{eq:datd3}) for value estimation (see the right figure in Fig \ref{fig:dacwalker2d}) due to the uncertainty over different critics. Moreover, it can observed in Fig \ref{fig:dacwalker2d} that the performance of DATD3 and TD3 are close, showing that the estimation bias control of DATD3 is limited. Thus, we analyze the estimation error introduced by double critics and propose two techniques to overcome it in this section. 

\textbf{Soft target value}. To control the underestimation bias more flexible, we propose to use a convex combination of $Q_1(s^\prime,\pi_{\phi_1^\prime}(s^\prime))$ and $Q_2(s^\prime,\pi_{\phi_2^\prime}(s^\prime))$ defined in Section \ref{sec:doubleactorvaluecorrection} for a softer estimation, which is given by:
\begin{equation}
    \label{eq:convexcomb}
    \hat{V}(s^\prime;\nu) = \nu \min \{ Q_1(s^\prime,\pi_{\phi_1^\prime}(s^\prime)), Q_2(s^\prime,\pi_{\phi_2^\prime}(s^\prime))\}+(1-\nu)\max \{Q_1(s^\prime,\pi_{\phi_1^\prime}(s^\prime), Q_2(s^\prime,\pi_{\phi_2^\prime}(s^\prime))\},
\end{equation}
where $\nu \in \mathbb{R}$ and $\nu\in[0,1)$. Eq. (\ref{eq:datd3}) is a special case of Eq. (\ref{eq:convexcomb}), and $\hat{V}(s^\prime;\nu)$ would lean towards the maximal value between $Q_1(s^\prime,\pi_{\phi_1^\prime}(s^\prime))$ and $Q_2(s^\prime,\pi_{\phi_2^\prime}(s^\prime))$ with $\nu\rightarrow 0$ and vice versa if $\nu\rightarrow 1$, in which case underestimation issues may arise. Therefore, we ought not to use large $\nu$ for training effective agents and there exist a trade-off for $\nu$, which is discussed in detail in Section \ref{sec:ablation}.

\textbf{Critic regularization.} We further propose to constrain the value estimates of the critics to mitigate the pessimistic value estimation, which leads to solving the optimization problem:
\begin{equation}
    \label{eq:optimizationproblem}
    \min_{\theta_i} \mathbb{E}_{s,a\sim\rho}[(Q_{\theta_i}(s,a)-y_t)^2], \quad \mathrm{s.t.} \quad \mathbb{E}_{s,a\sim\rho}[Q_{\theta_1}(s,a) - Q_{\theta_2}(s,a)] \le \delta,
\end{equation}
where $i\in\{1,2\}$ and $y_t=r_t+\gamma \hat{V}_t(s^\prime;\nu)$ is the target value. Note that there is no guarantee in better value estimate using merely convex combination because TD3 may induce underestimation problem as it utilizes the minimal value estimation from two \emph{independent} critic networks for target value update, where issues may arise if one of them fails to approximate the true Q-value, which is the root of the underestimation issue. With the soft estimation of value function via Eq. (\ref{eq:convexcomb}) that correlates both critics, we might mitigate the problem, while if both critics over- or underestimate the value function, the combined value estimate would be far from satisfying. The independence in critic networks increases the uncertainty and impedes the critics from estimating value function in a good manner. Regularizing critics such that their value estimate would not deviate from each other too far is a good way to mitigate the uncertainty. 

\textbf{Theoretical analysis.} We give an upper bound on value estimation using Eq. (\ref{eq:convexcomb}) based on the following definitions to further illustrate the necessity of critic regularization.

\begin{definition}
\label{def:valueerror}
Define the \textbf{value error} as the maximal distance between Eq. (\ref{eq:convexcomb}) with $\nu=0$ and the max operator at the $t$-th iteration, i.e., $\|\max_{a\in\mathcal{A}} Q_t(s,a) - \hat{V}_t(s;\nu=0)\|_\infty$.
\end{definition}

\begin{definition}
\label{def:policyerror}
Define the \textbf{policy execution error} as the maximal difference of value functions induced by the critic $Q_{\theta_i}$, $i=1,2$, that uses different policies $\pi_{\phi_1}(s)$ and $\pi_{\phi_2}(s)$, i.e., $\|Q_{\theta_i}(s,\pi_{\phi_1}(s)) - Q_{\theta_i}(s,\pi_{\phi_2}(s))\|_\infty$.
\end{definition}

\begin{definition}
\label{def:estimationerror}
Define the \textbf{critic deviance error} as the maximal distance of value functions induced by two critics under the same action $a$, i.e., $\|Q_{\theta_1}(s,a) - Q_{\theta_2}(s,a)\|_\infty$.
\end{definition}

If the errors defined above can be controlled during the training process, we could arrive at the following bound (see Appendix \ref{sec:theorem3} for proof):

\begin{theorem}
\label{theo:valueiteration}
Assume that the value error is bounded at the t-th iteration, i.e., $\|\max_{a\in\mathcal{A}} Q_t(s,a) - \hat{V}_t(s;\nu=0)\|_\infty \le \epsilon_t$ where $\epsilon_t$ is a time-variant upper bound. Assume that the policy execution error is bounded, i.e., $\|Q_{\theta_i}(s,\pi_{\phi_1}(s)) - Q_{\theta_i}(s,\pi_{\phi_2}(s))\|_\infty\le\epsilon_\pi, i=1,2$, and the critic deviance error is also bounded, i.e., $\forall s,a, \|Q_{\theta_1}(s,a) - Q_{\theta_2}(s,a)\|_\infty\le \epsilon_d$. Then for any iteration $t$, the difference between the optimal value function $V^*(s)$ and the value function $V(s)$ induced by the double actors satisfies:
\begin{equation*}
    \| V_t(s) - V^*(s)\|_\infty \le \gamma^t\|V_0(s) - V^*(s)\|_\infty + \sum_{k=0}^t \gamma^k \epsilon_k +  \dfrac{\nu}{1-\gamma}(2\epsilon_d+\epsilon_\pi).
\end{equation*}
\end{theorem}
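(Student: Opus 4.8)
The plan is to treat the DARC update as an instance of \emph{approximate value iteration}: at each iteration the exact greedy Bellman backup is replaced by the soft double-actor value $\hat V_t(\cdot;\nu)$, which differs from the exact backup by a per-step error I will bound by $\epsilon_t + \nu(2\epsilon_d+\epsilon_\pi)$. Once that per-step error is in hand, the three terms of the claim arise in the usual way: the initial gap $\|V_0-V^*\|_\infty$ is contracted by $\gamma^t$, the value errors $\epsilon_k$ accumulate into $\sum_k\gamma^k\epsilon_k$, and the \emph{constant} $\nu$-induced bias $\nu(2\epsilon_d+\epsilon_\pi)$, summed over all remaining iterations, collapses to $\tfrac{\nu}{1-\gamma}(2\epsilon_d+\epsilon_\pi)$ via $\sum_{j\ge0}\gamma^j=\tfrac{1}{1-\gamma}$.

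First I would bound the per-step error. Writing $Q_1=\min_i Q_{\theta_i}(s,\pi_{\phi_1}(s))$ and $Q_2=\min_k Q_{\theta_k}(s,\pi_{\phi_2}(s))$ as in Section~\ref{sec:doubleactorvaluecorrection}, the convex combination (\ref{eq:convexcomb}) gives the exact identity
\begin{equation*}
\hat V_t(s;\nu=0) - \hat V_t(s;\nu) = \nu\bigl(\max\{Q_1,Q_2\} - \min\{Q_1,Q_2\}\bigr) = \nu\,|Q_1 - Q_2|,
\end{equation*}
so the soft backup departs from the $\nu=0$ (pure-max) backup by exactly $\nu|Q_1-Q_2|$. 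To control $|Q_1-Q_2|$ I route both quantities through the single reference critic $Q_{\theta_1}$ evaluated at the two policies:
\begin{equation*}
|Q_1 - Q_2| \le \underbrace{|Q_1 - Q_{\theta_1}(s,\pi_{\phi_1}(s))|}_{\le\,\epsilon_d} + \underbrace{|Q_{\theta_1}(s,\pi_{\phi_1}(s)) - Q_{\theta_1}(s,\pi_{\phi_2}(s))|}_{\le\,\epsilon_\pi} + \underbrace{|Q_{\theta_1}(s,\pi_{\phi_2}(s)) - Q_2|}_{\le\,\epsilon_d},
\end{equation*}
where the two outer terms are at most $\epsilon_d$ because $Q_1$ and $Q_2$ are minima over the two critics and hence lie within the critic deviance of $Q_{\theta_1}$, and the middle term is at most $\epsilon_\pi$ by the policy execution error. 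This yields $|Q_1-Q_2|\le 2\epsilon_d+\epsilon_\pi$, the factor two on $\epsilon_d$ being precisely the cost of passing through $Q_{\theta_1}$ once per policy. Combining with the value-error assumption $\|\max_a Q_t(s,a)-\hat V_t(s;\nu=0)\|_\infty\le\epsilon_t$ and the triangle inequality gives the per-step discrepancy $\|\max_a Q_t(s,\cdot)-\hat V_t(s;\nu)\|_\infty\le\epsilon_t+\nu(2\epsilon_d+\epsilon_\pi)$.

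Next I would cast the recursion in operator form. With $V_t(s):=\hat V_t(s;\nu)$ the double-actor value and $Q_{t+1}(s,a)=r(s,a)+\gamma\mathbb{E}_{s'}[V_t(s')]$ the bootstrapped target, the exact greedy backup satisfies $\max_a Q_{t+1}(s,a)=\mathcal{T}^*V_t(s)$, where $\mathcal{T}^*$ is the Bellman optimality operator, a $\gamma$-contraction in $\|\cdot\|_\infty$ with fixed point $V^*$. Reading off $V_{t+1}=\hat V_{t+1}(\cdot;\nu)$ from $Q_{t+1}$ introduces exactly the per-step error bounded above, so $V_{t+1}=\mathcal{T}^*V_t+e_{t+1}$ with $\|e_{t+1}\|_\infty\le\epsilon_{t+1}+\nu(2\epsilon_d+\epsilon_\pi)=:\delta_{t+1}$, and hence
\begin{equation*}
\|V_{t+1}-V^*\|_\infty = \|\mathcal{T}^*V_t - \mathcal{T}^*V^* + e_{t+1}\|_\infty \le \gamma\|V_t-V^*\|_\infty + \delta_{t+1}.
\end{equation*}
Unrolling this recursion from $V_0$ and bounding the tail geometric sum by $\tfrac{1}{1-\gamma}$ separates $\delta_k$ into the accumulated value-error term and the constant $\tfrac{\nu}{1-\gamma}(2\epsilon_d+\epsilon_\pi)$, giving the claim. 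The main obstacle is not any single estimate but making the operator abstraction honest: I must pin down exactly what ``$V_t$ induced by the double actors'' is, verify that replacing $\max_a Q_t$ by $\hat V_t$ is the \emph{only} approximation (so the error enters additively after an exact $\mathcal{T}^*$ backup, with no stray $\gamma$), confirm that the $\max$ in Definition~\ref{def:valueerror} refers to the same critic used in the backup, and check that the $\gamma$-power bookkeeping in the unrolled sum matches the stated $\sum_{k=0}^t\gamma^k\epsilon_k$ after relabeling the summation index under the paper's convention.
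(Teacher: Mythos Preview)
Your proposal is correct and follows essentially the same route as the paper: the paper also proves the auxiliary bound $\|Q_1-Q_2\|_\infty\le 2\epsilon_d+\epsilon_\pi$ by a three-term triangle-inequality detour through one fixed critic (they use $Q_{\theta_2}$ where you use $Q_{\theta_1}$), then splits $|V_{t+1}-V^*|$ into the per-step error $\epsilon_t+\nu(2\epsilon_d+\epsilon_\pi)$ plus $\gamma\|V_t-V^*\|_\infty$ via the non-expansiveness of $\max$, and unrolls. Your explicit operator framing and your flagged concern about the $\epsilon_k$-indexing are well placed, since the paper's recursion writes $\epsilon_t$ in the step producing $V_{t+1}$ and the final sum $\sum_{k=0}^t\gamma^k\epsilon_k$ relies on exactly the relabeling you anticipate.
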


\textbf{Remark}: This theorem guarantees the rationality of employing the value estimation induced by the double actors. The upper bound would be $O(\frac{\nu}{1-\gamma})$ if the value error can be controlled in a valid scale. It is reasonable to assume that the value error can be bounded by $\epsilon_t$ as the maximal estimation from two critics would fail to approximate the max operator if this requirement is violated, which would lead to large underestimation bias. Note that we cannot get a tighter bound by simply setting $\nu=0$ as these bounds are correlated with each other, e.g., the value error may become larger and lead to inaccuracy in value estimate if there are no constraints on policy execution error and critic deviance error due to the uncertainty in value estimation from two independent critics.

The value error is usually easy to control as the critic network is intrinsically trained to minimize this error to approach the true Q-value, while there are no guarantee in effective control in critic deviance error and the policy execution error as the critics and actors are trained independently, which increases the value estimation uncertainty and may negatively affect the performance of the agent. Therefore, it is necessary to regularize critics. In our implementation, we resort to penalty function methods by regularizing the original objective with critic deviance error to avoid the complex and expensive nonlinear programming costs. We also adopt a cross update scheme (see graphical illustration of this scheme in Appendix \ref{sec:cross-update}) where only one actor-critic pair is updated each time and meanwhile the other pair is only used for value correction, which naturally leads to the delayed update of the target network and hence contributes to the policy smoothing. We execute the action that brings higher return in our implementation to enjoy better exploration capability. The resulting method, Double Actors Regularized Critics (DARC) algorithm, is presented in Algorithm \ref{alg:algdarc}.

It is worth noting that the underestimation bias can be relieved with critic regularization naturally as the critic is constrained to approach both the target value and the other critic simultaneously. The estimation bias of DARC would be more conservative than that of DATD3, and DARC is more stable and efficient with soft value estimate and critic regularization. We present detailed bias comparison of DARC with DATD3 and TD3 in Appendix \ref{sec:biasdarc}.

\section{Experiments}
\label{sec:experiment}

In this section, we first conduct a detailed ablation study on DARC to investigate what contributes most to the performance improvement. We then extensively evaluate our method on two continuous control benchmarks where we use the state-of-the-art methods including TD3 \cite{fujimoto2018addressing} and SAC \cite{haarnoja2018soft} as baselines. Moreover, we extensively compare DARC with other value estimation correction methods on the benchmarks to further illustrate the effectiveness of DARC.

We adopt two widely-used continuous control benchmarks, OpenAI Gym \cite{brockman2016openai} simulated by MuJoCo \cite{todorov2012mujoco} and Box2d \cite{catto2011box2d} and PyBullet Gym simulated by PyBullet \cite{benelot2018}. We compare our method against DDPG, TD3 and Soft Actor-Critic (SAC), where we use the fine-tuned version of DDPG proposed in TD3 and temperature auto-tuned SAC \cite{haarnoja2018softactorcritic}. We use open-sourced implementation of these baselines \cite{TD3, SAC}. Each algorithm is repeated with $5$ independent seeds and evaluated for $10$ times every $5000$ timesteps. DARC shares the identical network configuration with TD3. The regularization coefficient is set to be $0.005$ by default and the value estimation weights $\nu$ is mainly selected from $[0.1,0.25]$ with $0.01$ as interval by using grid search. We use the same hyperparameters in DARC as the default setting for TD3 on all tasks except Humanoid-v2 where all these methods fail with default hyperparameters. We run for $3\times 10^6$ timesteps for Humanoid-v2 and $1\times10^6$ timesteps for the rest of the tasks for better illustration. Details for hyperparameters are listed in Appendix \ref{sec:hyperparameters}.

\subsection{Ablation study}
\label{sec:ablation}
We conduct the ablation study and parameter study on one typical MuJoCo environment HalfCheetah-v2, which is adequate to show the influence of different parts and parameters.

\begin{figure}
    \centering
    \subfigure[]{
    \label{fig:darccomponent}
    \includegraphics[scale=0.34]{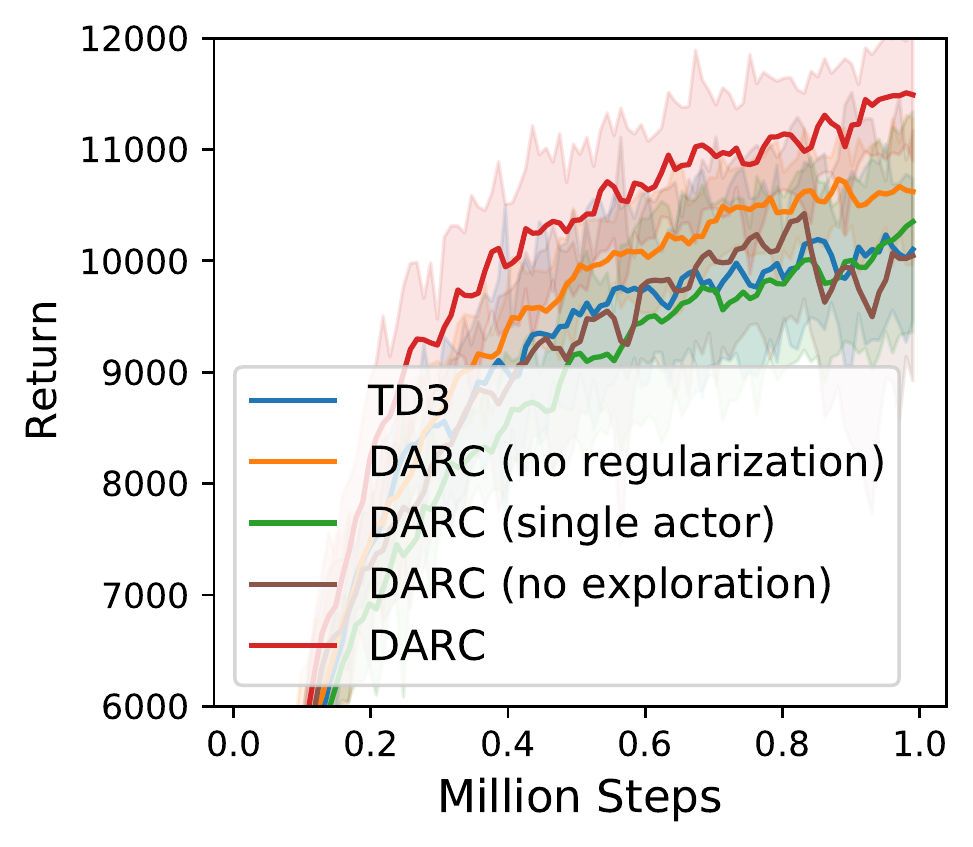}
    }\hspace{-2mm}
    \subfigure[]{
    \label{fig:darcupdate}
    \includegraphics[scale=0.34]{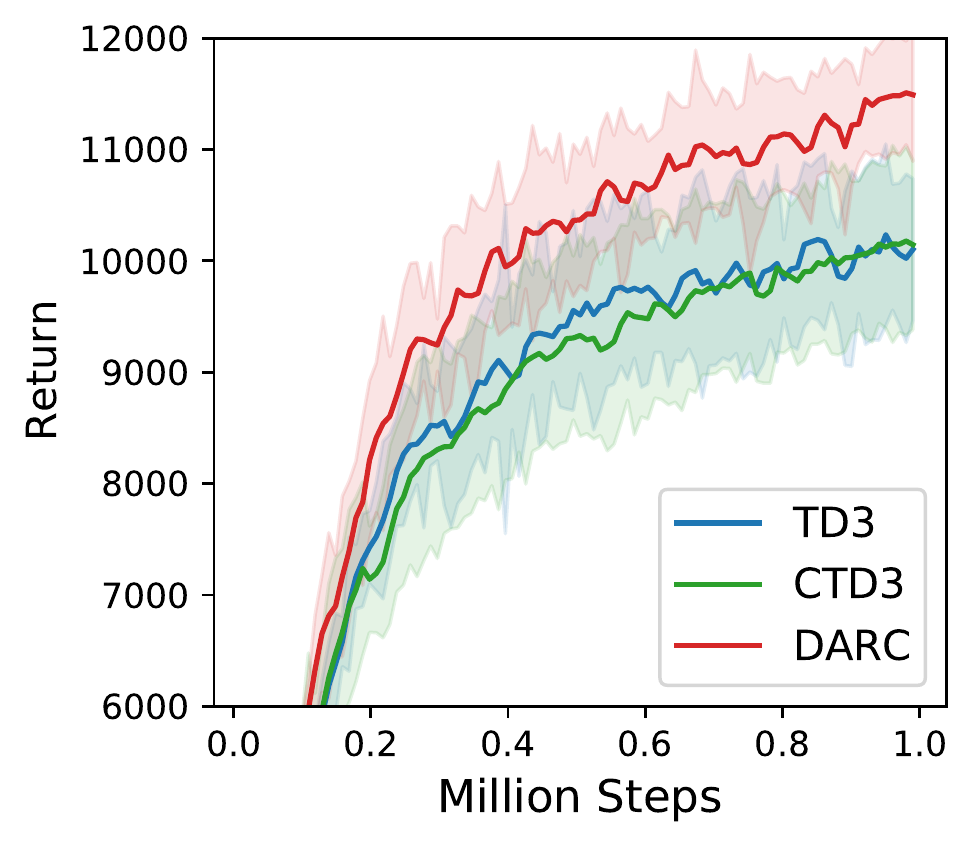}
    }\hspace{-2mm}
    \subfigure[]{
    \label{fig:darclambda}
    \includegraphics[scale=0.34]{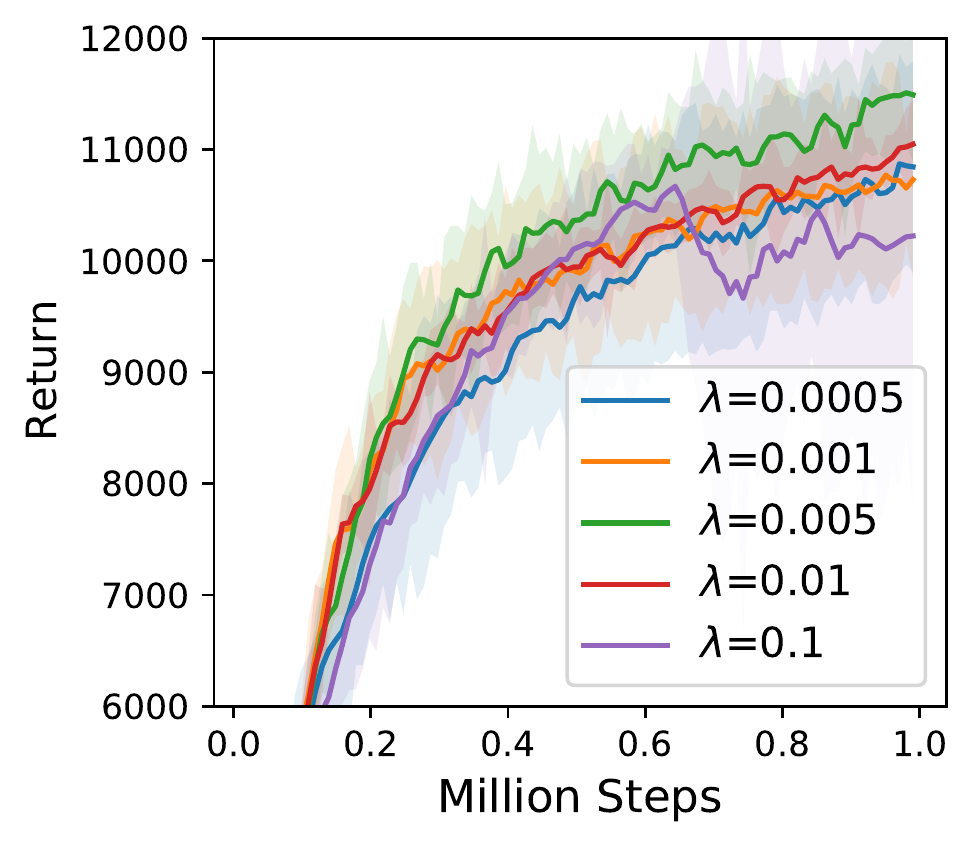}
    }\hspace{-2mm}
    \subfigure[]{
    \label{fig:darcnu}
    \includegraphics[scale=0.34]{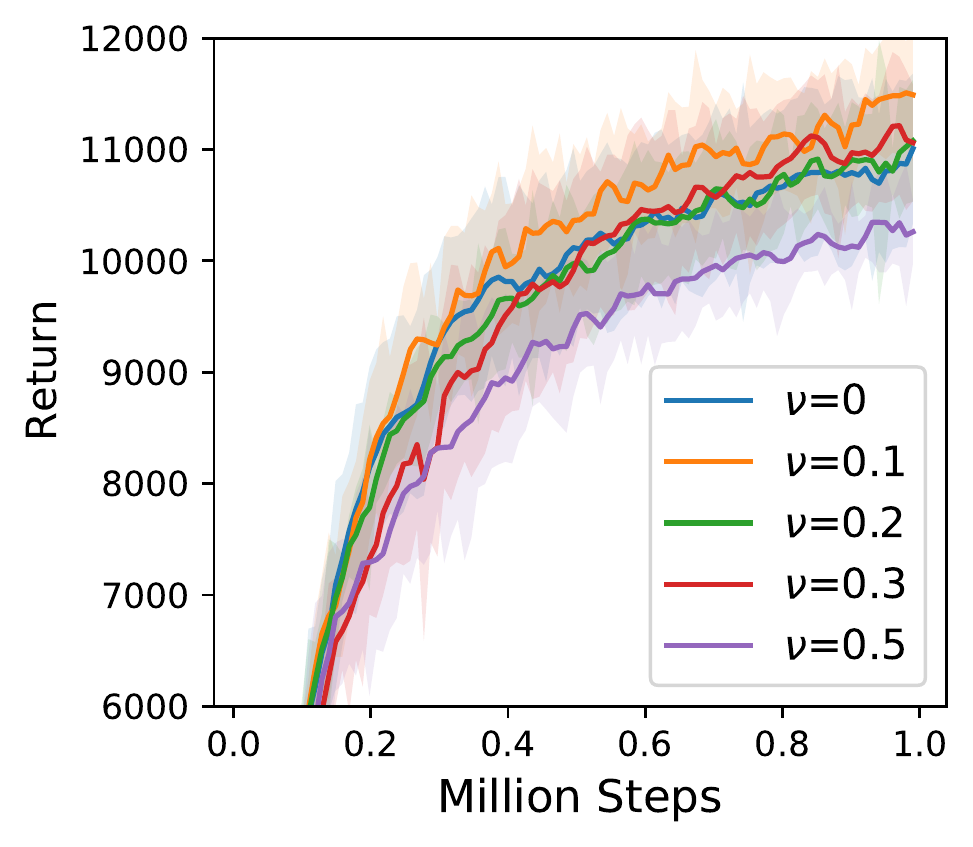}
    }\hspace{-2mm}
    \caption{Ablation study on HalfCheetah-v2 (5 runs, mean $\pm$ standard deviation). (a) Components; (b) Update scheme; (c) Regularization parameter $\lambda$; (d) Weighting coefficient $\nu$.}
    \label{fig:ablation}
\end{figure}

\textbf{Components}. We show in Fig \ref{fig:darccomponent} that DARC without either value estimation correction with double actors or regularization on critics would lead to a decrease in the performance on HalfCheetah-v2. We find that value correction with double actors contributes most to the performance improvement on TD3 while pure critic regularization can only be powerful if the value estimation is good enough. Furthermore, we exclude the exploration effect by executing actions following the first actor in DARC (see Fig \ref{fig:darccomponent}), which induces a decrease in performance. We also compare DARC with Cross-update TD3 (CTD3) that adopts cross update scheme where the critics are updated separately, and the result in Fig \ref{fig:darcupdate} shows that DARC outperfroms CTD3. The behavior of CTD3 is similar to that of vanilla TD3 as there does not exist any guarantee in better value estimation if we merely individually update critic network. DARC adopts cross update scheme to fulfill the delayed update in actor network.

\textbf{The regularization parameter} $\lambda$. $\lambda$ balances the influence of the difference in two critic networks. Large $\lambda$ may cause instability in value estimation and impede the agent from learning a good policy. While small $\lambda$ may induce a slow and conservative update, which weakens the effect and benefits of critic regularization. Luckily, there does exist an intermediate value that could achieve a trade-off as is shown in Fig \ref{fig:darclambda}, where one can find that DARC is not sensitive to $\lambda$ as long as it is not too large. Note that $0.005$ may not be the optimal $\lambda$ for all tasks, and hence there is every possibility that one may get even better performance than the results reported in this paper by tuning this parameter.

\textbf{The weighting coefficient} $\nu$. The weighting coefficient $\nu$ directly influences the value estimation of DARC. As is discussed in Section \ref{sec:darc}, large $\nu$ would yield underestimation issues and small $\nu$ may induce large overestimation bias. We show in Fig \ref{fig:darcnu} that there exists a suitable $\nu$ that could offer the best trade-off. Note that the coefficient $\nu$ is diverse for different tasks.

\subsection{Extensive experiments}
\label{sec:extensiveexp}

The overall performance comparison is presented in Fig \ref{fig:darcresult} where the solid line represents the averaged return and the shaded region denotes standard deviation. We use the smoothing strategy with sliding window $3$ that is suggested in OpenAI baselines \cite{baselines} for better demonstration. As is demonstrated in Fig \ref{fig:darcresult}, DARC significantly outperforms TD3 with much higher sample efficiency, e.g., DARC consumes 50\% fewer interaction times to reach the highest return than TD3 in HalfCheetah-v2 task with around 30\% additional training time. DARC learns much faster than other methods. 

\begin{figure}
    \centering
    \includegraphics[scale=0.35]{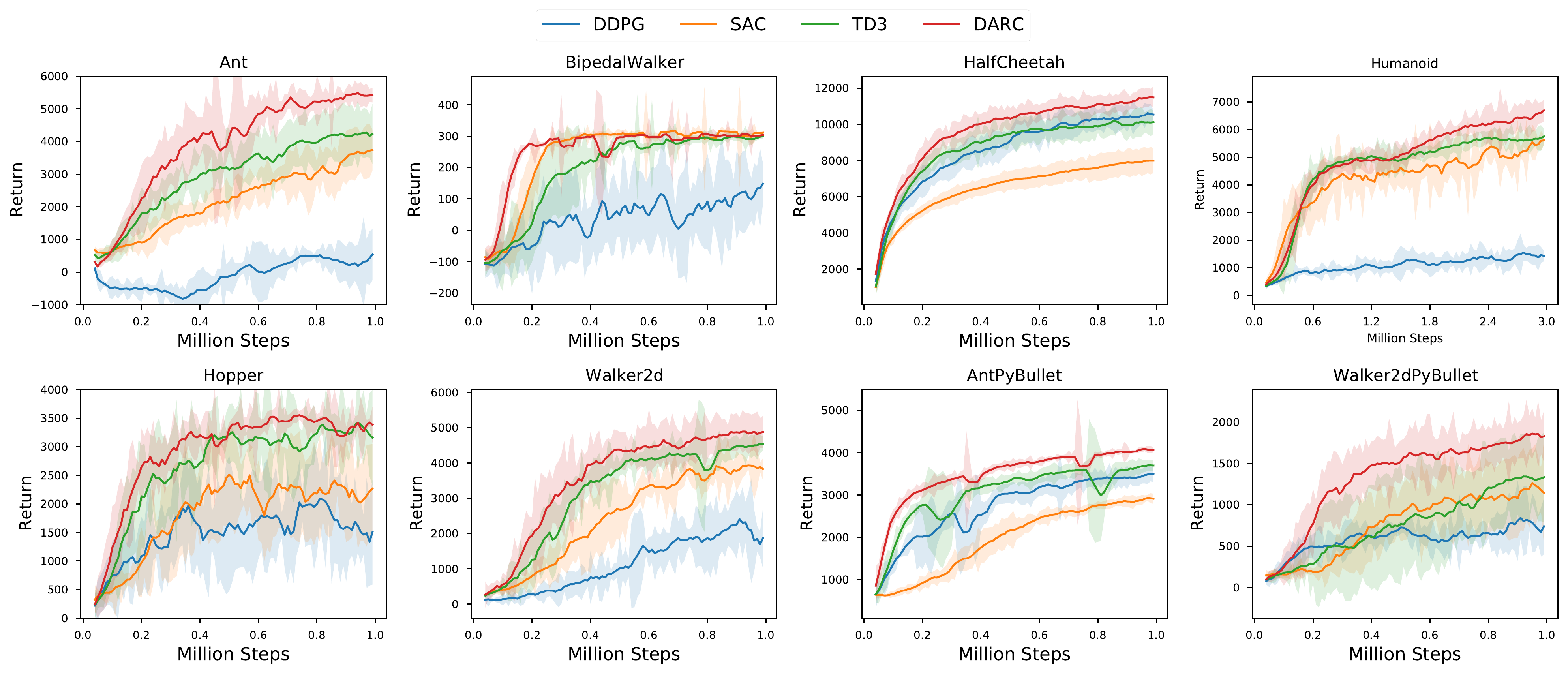}
    \caption{Performance comparison in OpenAI Gym and PyBullet Gym environments.}
    \label{fig:darcresult}
\end{figure}

\subsection{Comparison with other value correction methods}

We additionally compare DARC with other recent value correction methods, SD3 \cite{pan2020softmax} and TADD \cite{wu2020reducing}, where SD3 leverages softmax operator on value function for a softer estimation and TADD leverages triple critics by weighting over them for a better estimation. We also compare DARC with Double Actors TD3 (DATD3). We conduct numerous experiments in identical environments in Section \ref{sec:extensiveexp} and report the final mean score over 5 independent runs in Table \ref{tab:scorecompare}, where one could see that DARC significantly outperforms these value correction methods in all tasks.

\section{Related Work}

Actor-Critic methods \cite{konda2000actor, prokhorov1997adaptive, konda1999actor} are widely-used in Reinforcement Learning (RL). The quality of the learned critic network is vital for a good performance in an RL agent when applying function approximation \cite{barth-maron2018distributional}, e.g., we can get an unbiased estimate of policy gradient if we enforce the critic to meet the compatibility conditions \cite{silver2014deterministic}.

How to estimate the value function in a good way remains an important problem in RL, and has been widely investigated in deep Q-network (DQN) \cite{hasselt2016deep, mnih2015human, sabry2019reduction} for discrete regime control. Lan et al. \cite{lan2020minmax} propose to take the minimum Q-value estimation under the ensemble scheme to control the value estimation bias in DQN, while Anschel et al. \cite{anschel2017averaged} leverage the average value of an ensemble of Q-networks for variance reduction. Apart from Q-ensemble methods, many effective and promising methods involving estimation weighting \cite{zhang2017weighted}, softmax operator \cite{song2019revisiting} are also explored. 

In continuous control domain, DDPG suffers from large overestimation bias and the improvement built upon DDPG includes distributional \cite{barth2018distributed, bellemare2017distributional}, model-based \cite{feinberg2018model,gu2016continuous}, prioritized experience replay \cite{horgan2018distributed} method, etc. TD3 tackles the issue by using double critics for value correction, while it may suffer from severe underestimation problem. There are many efforts in utilizing TD3 for distributional training \cite{ma2020dsac}, Hierarchical RL (HRL) \cite{nachum2018data}, and so on, while value estimation correction methods for performance improvement are rarely investigated.  Wu et al. \cite{wu2020reducing} adopt triple critics and correct the value estimation by weighting over these critics, while Pan et al. \cite{pan2020softmax} propose to apply softmax operator for value correction. There are also some prior works \cite{kuznetsov2020controlling, roy2020opac} that adopt critic ensemble for bias alleviation, while they both underperform SD3. Also, two parallel actor-critic architecture are explored to learn better options \cite{ZhangW19dac}. Despite these advances, few of them investigate the role and benefits of double actors in value correction, which is the focus of our work. Furthermore, training multiple critic or actor networks can be expensive, while DARC is efficient.

\begin{table}
  \caption{Numerical performance comparison on final score (3M steps for Humanoid and 1M steps for the rest) between DARC and other value estimation correction methods. W2dPyBullet refers to Walker2dPybullet. The best results are in bold.}
  \label{tab:scorecompare}
  \centering
  \begin{tabular}{lllllll}
    \toprule
    Environment   & TD3 & TADD & SD3 & DATD3 (ours) &  \textbf{DARC (ours)} \\
    \midrule
    Ant  & 4164.10 & 4593.01 & 4541.71 & 5180.29 & \textbf{5642.33}$\pm$\textbf{188.82}   \\
    BipedalWalker  & 294.08 & 303.42 & 299.69 & 305.09 &  \textbf{311.25}$\pm$\textbf{2.66}  \\
    HalfCheetah  & 10237.62 & 10099.80 & 10934.72 & 10623.96 &  \textbf{11600.74}$\pm$\textbf{499.11} \\
    Hopper & 3145.20 & 3142.16 & 3286.24 & 2822.94 &  \textbf{3577.93}$\pm$\textbf{133.97} \\
    Humanoid & 5992.28 & 6182.54 & 5809.18 & 5960.03 &  \textbf{6737.63}$\pm$\textbf{743.95} \\
    Walker2d & 4605.25 & 4834.59 & 4622.89 & 4694.75 &  \textbf{5045.36}$\pm$\textbf{548.12} \\
    AntPyBullet & 3683.49 & 3216.75 & 3762.93 & 3949.02 &  \textbf{4100.01}$\pm$\textbf{19.24} \\
    W2dPyBullet & 1385.01 & 1150.07 & 1497.06 & 1777.24 &  \textbf{1902.46}$\pm$\textbf{217.25}\\
    \bottomrule
  \end{tabular}
\end{table}

Finally, our method is related to the regularization method, which has been broadly used outside RL, for instance, machine learning \cite{bauer2007regularization, poggio1987computational}, computer vision \cite{girosi1995regularization, moradi2020survey, wan2013regularization}, etc. Inside RL, regularization strategy is widely used in offline RL \cite{lange2012batch, wu2019behavior}, model-based RL \cite{boney2020regularizing, d2020learn}, and maximum entropy RL \cite{haarnoja2018soft, zhao2019maximum}. We, however, propose to regularize critics to ensure that the value estimation from them would not deviate far from each other, which contributes to the robustness of the DARC algorithm.

\section{Conclusion}

In this paper, we explore and illustrate the benefits of double actors in continuous control tasks, which has long been ignored, where we show the preeminent exploration property and the bias alleviation property of double actors on both single critic and double critics. We further propose to regularize critics to enjoy better stability and mitigate large difference in value estimation from two independent critics. Taken together, we present Double Actors Regularized Critics (DARC) algorithm which extensively and significantly outperforms state-of-the-art methods as well as other value estimation correction methods on standard continuous control benchmarks. 

The major limitation of our work is the additional actor network training cost due to double actor structure, which would consume more time to train the agent than TD3 given the same interaction steps. While this cost is tolerable considering the increasing computation speed as well as capacity and the high sample efficiency of DARC. For future work, it would be interesting to extend the DARC algorithm from double-actor-double-critic architecture into multi-actor-multi-critic structure.

\small
\bibliographystyle{abbrv}
\bibliography{neurips_2021.bib}

\appendix

\section{Double Actors for Value Correction}

\subsection{Double Actors on Single Critic}
\label{sec:doubleactoronecritic}
\subsubsection{The DADDPG Algorithm}
The full algorithm of DADDPG (Double Actors Deep Deterministic Policy Gradient) is shown in Algorithm \ref{alg:algdaddpg}. DADDPG is constructed based on Deep Deterministic Policy Gradient (DDPG) \cite{lillicrap2015continuous} algorithm where we use double actors to mitigate the devastating overestimation issues that is often reported in DDPG. As is in TD3, DADDPG adopts delayed update in the target network for better stability in policy execution. The actors in DADDPG are not updated at the same time such that both actors are updated in a delayed style, which is similar in spirit to the cross-update scheme in DATD3 and DARC. DADDPG is a simple yet effective variant of DDPG.

\begin{algorithm}[htb]
\caption{Double Actors Deep Deterministic Policy Gradient (DADDPG)}
\label{alg:algdaddpg}
\begin{algorithmic}[1] 
\STATE Initialize critic networks $Q_{\theta}$ and actor networks $\pi_{\phi_1}, \pi_{\phi_2}$ with random parameters $\theta$, $\phi_1$, $\phi_2$
\STATE Initialize target networks $\theta^\prime \leftarrow \theta, \phi_1^\prime \leftarrow \phi_1, \phi_2^\prime \leftarrow \phi_2$ and replay buffer $\mathcal{B} = \{\}$.
\FOR{$t$ = 1 to $T$}
\STATE Select action $a$ with Gaussian exploration noise $\epsilon$ based on $\pi_{\phi_1}$ and $\pi_{\phi_2}$, $\epsilon\sim \mathcal{N}(0,\sigma)$
\STATE Execute action $a$ and observe reward $r$, new state $s^\prime$ and done flag $d$
\STATE Store transitions in the replay buffer, i.e., $\mathcal{B}\leftarrow\mathcal{B}\bigcup \{(s,a,r,s^\prime,d)\}$
\STATE Sample $N$ transitions $\{(s_j,a_j,r_j,s_j^\prime,d_j)\}_{j=1}^N\sim\mathcal{B}$
\STATE $a^\prime\leftarrow \pi_{\phi_1^\prime}(s^\prime) + \epsilon$, $a^{\prime\prime} \leftarrow \pi_{\phi_2^\prime}(s^\prime) + \epsilon$, $\epsilon\sim$ clip($\mathcal{N}(0,\bar{\sigma}),-c,c$)
\STATE $\hat{V}(s^\prime)\leftarrow \min \left\{ Q_\theta(s^\prime, a^\prime), Q_\theta(s^\prime,a^{\prime\prime})  \right\}$
\STATE $y_t \leftarrow r + \gamma(1-d) \hat{V}(s^\prime)$
\STATE Update critic $\theta$ by minimizing: $\frac{1}{N}\sum_s  (Q_{\theta}(s,a)-y_t)^2$
\IF{$t \mod 2$}
\STATE Update actor $\phi_1$ with policy gradient: $ \frac{1}{N}\sum_s \nabla_a Q_{\theta}(s,a)|_{a=\pi_{\phi_1}(s)}\nabla_{\phi_1}\pi_{\phi_1}(s)$
\STATE Update target networks: $\theta^\prime \leftarrow \tau\theta + (1-\tau)\theta^\prime, \phi_1^\prime\leftarrow\tau\phi_1+(1-\tau)\phi_1^\prime$
\ELSE
\STATE Update actor $\phi_2$ with policy gradient: $ \frac{1}{N}\sum_s \nabla_a Q_{\theta}(s,a)|_{a=\pi_{\phi_2}(s)}\nabla_{\phi_2}\pi_{\phi_2}(s)$
\STATE Update target network: $\phi_2^\prime\leftarrow\tau\phi_2+(1-\tau)\phi_2^\prime$
\ENDIF
\ENDFOR
\end{algorithmic}
\end{algorithm}

\subsubsection{Comparison of DADDPG and TD3}
\label{sec:daddpgtd3}
DADDPG leverages double actors for value correction while TD3 leverages double critics for value correction \cite{fujimoto2018addressing}. As is in TD3, we adopt delayed update in target networks and the update of double actors are crossed instead of updated simultaneously. TD3 uses the first critic to update the actor network while we utilize the actor network that results in larger return for action execution. 

Naturally, one may be interested in the following question: which way of correction is better? We answer this question by comparing DADDPG with TD3 in three commonly used environments in MuJoCo \cite{todorov2012mujoco}, Ant-v2, Hopper-v2 and Walker2d-v2. The result is shown in Fig \ref{fig:daddpgvstd3}, where one could find that DADDPG underperforms TD3 while both DADDPG and TD3 outperform vanilla DDPG.

DADDPG outperforms DDPG naturally due to the value correction benefits and improved exploration ability with the aid of double actors, where the severe overestimation bias of DDPG is effectively reduced. While DADDPG behave not as good as TD3 does, because the actors may not execute satisfying policies if the critic network itself is badly fitted. An actor is as good as is allowed by its counterpart critic. We extensively compare DADDPG and TD3 on broader environments in MuJoCo \cite{todorov2012mujoco} and Pybullet \cite{benelot2018} (1M steps and 5 runs for each method) as is shown in Table \ref{tab:correctionmethodcomp} where one may find that DADDPG outperforms TD3 on some tasks like Walker2dPybullet. While on most of the tasks, DADDPG is worse than TD3. Moreover, the good performance of DADDPG is credit to the advanced exploration capability brought by double actors and one can find that DADDPG without exploration (i.e., we only adopt the second actor for value correction and rely on the first actor for policy execution) performs much worse than that of TD3. Hence, the value estimation correction with double critics is better than that with double actors, and that is one major reason that we choose to correct the value estimation in double actors double critics architecture by taking minimum firstly over double critics with respect to each actor network and then take maximal value of value estimations from double actors for final value function estimate.

\begin{figure}
    \centering
    \includegraphics[scale=0.5]{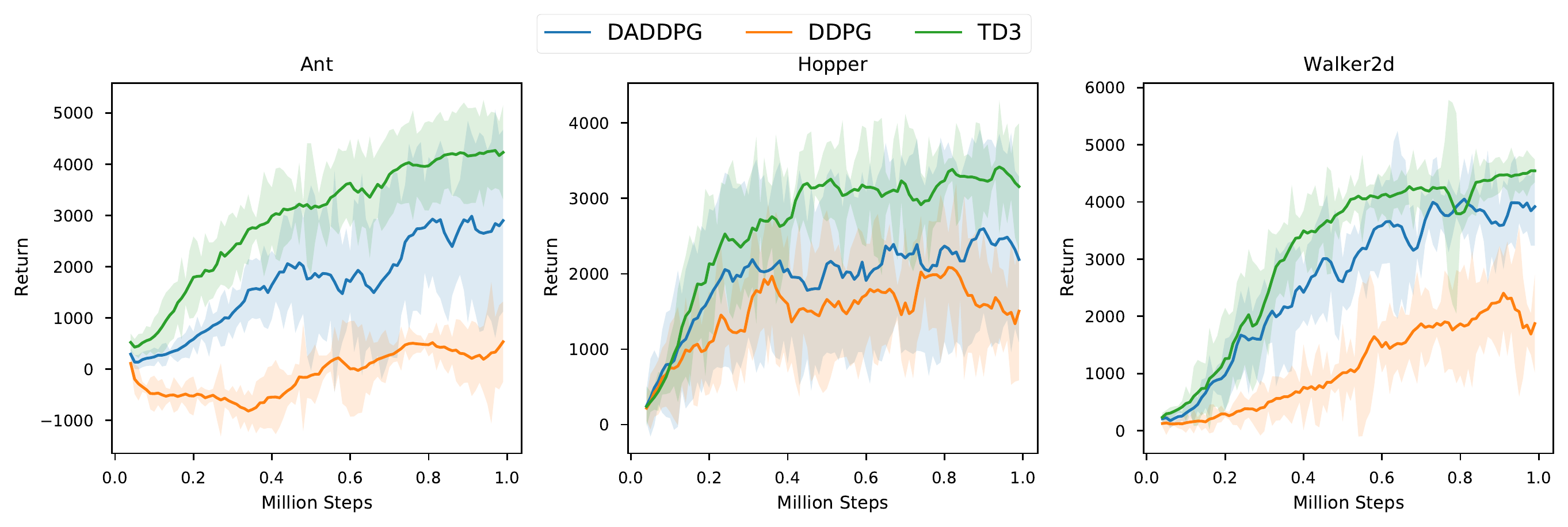}
    \caption{Performance comparison of DDPG, DADDPG and TD3 on typical MuJoCo environments (5 runs, mean $\pm$ standard deviation).}
    \label{fig:daddpgvstd3}
\end{figure}

\begin{table}
  \caption{Numerical performance comparison on final score of DDPG, DADDPG (no exploration), DADDPG and TD3. W2dPyBullet refers to Walker2dPybullet. The bese results are in bold.}
  \label{tab:correctionmethodcomp}
  \centering
  \begin{tabular}{lllllll}
    \toprule
    Environment   & DDPG & DADDPG (no exploration) & DADDPG (ours) & TD3 \\
    \midrule
    Ant  & 537.67 & 1013.44 & 2977.07 &  \textbf{4164.10}  \\
    BipedalWalker & 182.16 & 207.47 & 219.61 & \textbf{294.08} \\
    HalfCheetah  & 10561.40 & 10236.91 & \textbf{10644.77} & 10237.62 \\
    Hopper & 1770.31 & 2946.32 & 2284.35 & \textbf{3145.20} \\
    Walker2d & 1384.86 & 4370.71 & 3939.56 & \textbf{4605.25}  \\
    AntPyBullet & 3435.11 & 3677.42  & \textbf{3743.63} & 3683.49 \\
    W2dPyBullet & 838.59 & 1120.52 & \textbf{1889.49} & 1385.01  \\
    \bottomrule
  \end{tabular}
\end{table}

\subsection{Double Actors on Toy GoldMiner Environment}
\label{sec:doubleactorstoy}
The detailed hyperparameter setup of DDPG and DADDPG for the GoldMiner toy environment is listed in Table \ref{tab:goldminer} which is similar to the parameter setup in Table \ref{tab:hyperparameter}. We run both DDPG and DADDPG for $40$ times independently with random seed ranging from $0$ to $39$. Since this GoldMiner environment is a sparse reward environment (as the miner could only get positive rewards when he digs in left or right gold mine which only has length of $1$), it is hard for vanilla DDPG to learn useful policy due to its inaccuracy in value estimation and weakness in exploration, while DADDPG could do better in exploring high-return states with the aid of double actors and hence learn better policies. DADDPG adopts the policy that would lead to higher expected return, which inherently encourages its exploration in unknown environments, and the value correction with the aid of double actors offer an guarantee in better value estimation. Double actors offer the agent two policy paths to select from where the agent can explore different paths instead of being restricted with a single policy as one can never know whether this single policy is good enough or not. We use one-third a standard deviation for better readability because of the large variance in the return curve.

\begin{table*}
\centering
\caption{Hyperparameters setup for DDPG and DADDPG on GoldMiner environment}
\label{tab:goldminer}
\begin{tabular}{lrr}
\toprule
\textbf{Hyperparameter}  & \textbf{Value} \\
\midrule
Shared & \\
\qquad Actor network  & \qquad  $(400,300)$ \\
\qquad Critic network & \qquad  $(400,300)$ \\
\qquad Batch size     &\qquad   $100$ \\
\qquad Learning rate  & \qquad $10^{-3}$ \\
\qquad Optimizer & \qquad Adam \\
\qquad Discount factor & \qquad $0.99$ \\
\qquad Replay buffer size & \qquad $10^6$  \\
\qquad Warmup steps & \qquad $10^4$ \\
\qquad Exploration noise &\qquad  $\mathcal{N}(0,0.1)$ \\
\qquad Target update rate & \qquad $5\times 10^{-3}$ \\
\midrule
DADDPG  & \\
\qquad Noise clip &\qquad  $0.5$ \\
\qquad Target noise &\qquad  $0.2$ \\
\bottomrule
\end{tabular}
\end{table*}

\subsubsection{Double Actors Help Mitigate the Pessimistic Underexploration Issues}
\label{sec:pessimistic}

\emph{Pessimistic underexploration} phenomenon is firstly introduced in \cite{ciosek2019better} which describes the insufficient exploration of the agent when interacting with the environment, which is caused by the lower bound approximation to the critic in methods like TD3 and SAC. Issues may arise if the critic is inaccurate and the maximum of the lower bound is spurious because pessimistic estimate of the critic along with a greedy actor update impedes the agent from executing unknown actions and exploring unknown states. However, this issue can be well mitigated with the aid of double actors. We validate this by conducting experiments on toy GoldMiner environment where we run TD3 and DATD3 $40$ times independently with random seeds $0$-$39$ where the second actor in DATD3 is only used for exploration for fair comparison, i.e., no value correction based on double actors is applied. The detailed hyperparameter setup is presented in Table \ref{tab:goldminertd3}. The performance comparison is available in Fig \ref{fig:toyperformance}. One could find that DATD3 significantly outperforms TD3 in both sample efficiency and overall performance. In Section \ref{sec:exploration} of the main text, we show that double actors improve value estimation upon single critic and could result in better exploration in the toy GoldMiner environment, where DADDPG visits the right gold mine with reward $+4$ more frequently, indicating that double actors on single critic could help escape from the local optimum (see Fig \ref{fig:statevisit} in the main text). We show in Fig \ref{fig:toyvisittime} that DATD3 also successfully converges to the global optimum by visiting the right gold mine (with reward $+4$) more frequently and visiting the left gold mine with reward $+1$ significantly less than TD3. The double critics in TD3 are pessimistic while the underexploration issue can be mitigated as double actors encourage exploration of the agent by executing the action that would bring higher future return, leading to better exploration to unknown states and actions. Therefore, we conclude that double actors effectively mitigate the pessimistic underexploration phenomenon in TD3, which naturally results in better performance on continuous control tasks.

\begin{figure}
    \centering
    \subfigure[Performance comparison]{
    \label{fig:toyperformance}
    \includegraphics[scale=0.5]{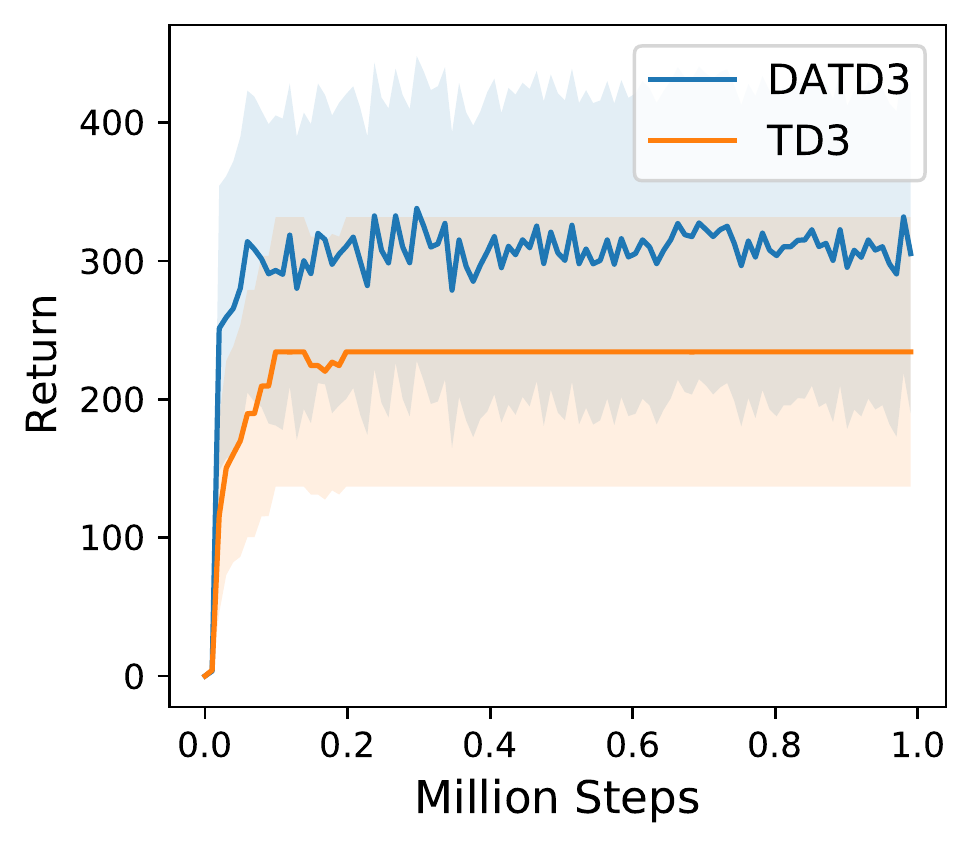}
    }\hspace{-2mm}
    \subfigure[Valuable State visit frequency]{
    \label{fig:toyvisittime}
    \includegraphics[scale=0.47]{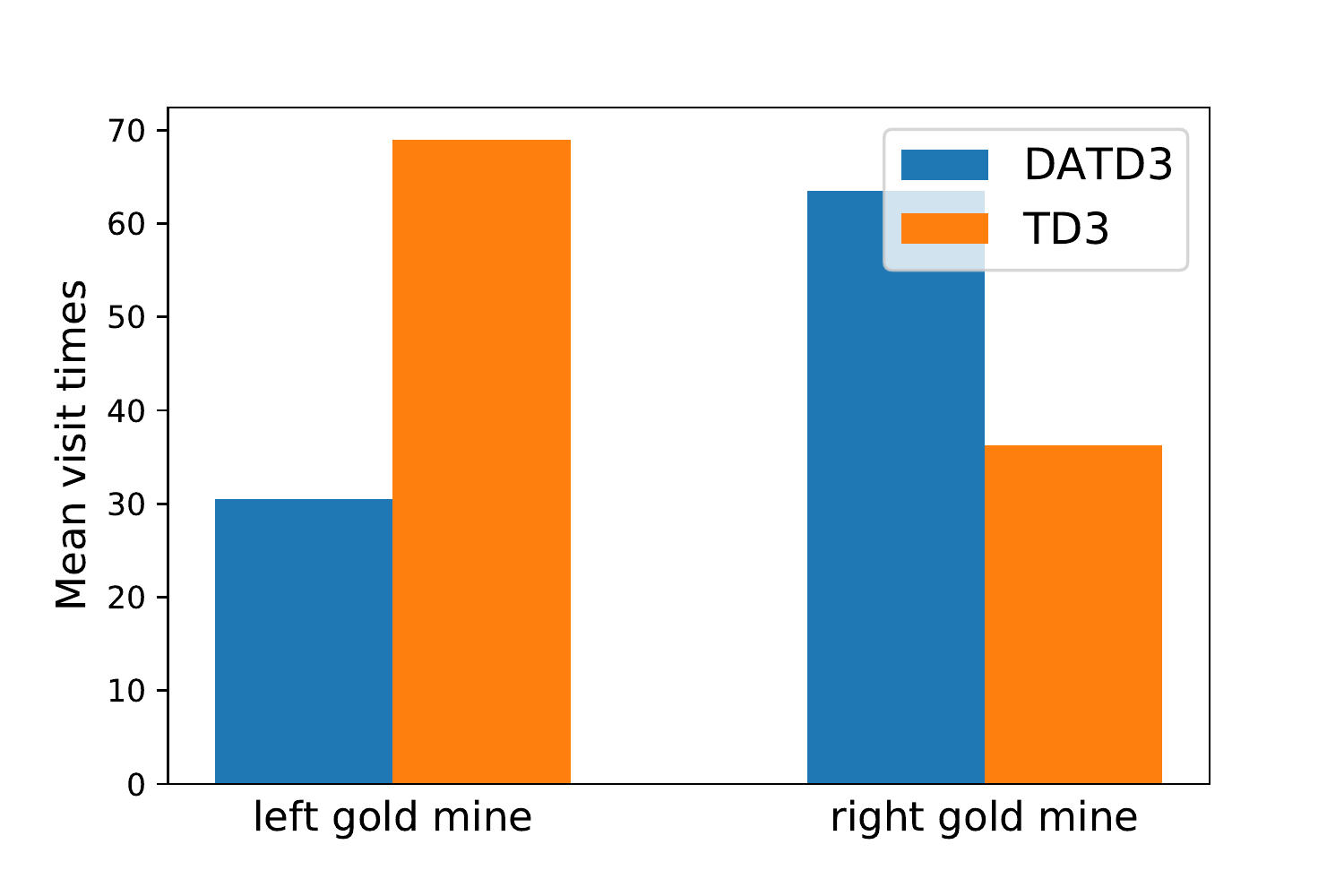}
    }\hspace{-2mm}
    \caption{Performance comparison and valuable state visit frequency comparison of TD3 and DATD3 on toy GoldMiner environment.}
    \label{fig:goldminerdoubleactor}
\end{figure}

\begin{table*}
\centering
\caption{Hyperparameters setup for TD3 and DATD3 on GoldMiner environment}
\label{tab:goldminertd3}
\begin{tabular}{lrr}
\toprule
\textbf{Hyperparameter}  & \textbf{Value} \quad \\
\midrule
Shared & \\
\qquad Actor network  & \qquad  $(400,300)$ \\
\qquad Critic network & \qquad  $(400,300)$ \\
\qquad Batch size     &\qquad   $100$ \\
\qquad Learning rate  & \qquad $10^{-3}$ \\
\qquad Optimizer & \qquad Adam \\
\qquad Discount factor & \qquad $0.99$ \\
\qquad Replay buffer size & \qquad $10^6$  \\
\qquad Warmup steps & \qquad $10^4$ \\
\qquad Exploration noise &\qquad  $\mathcal{N}(0,0.1)$ \\
\qquad Target update rate & \qquad $5\times 10^{-3}$ \\
\qquad Target noise & \qquad $0.2$ \\
\qquad Noise clip & \qquad $0.5$ \\
\midrule
TD3 & \\
\qquad Target update interval & $2$ \\
\bottomrule
\end{tabular}
\end{table*}

\subsection{Double Actors on Double Critics}
\label{sec:doubleactorsdoublecritics}
\subsubsection{Graphical Illustration on difference between TD3 and DARC}
\label{fig:graphdifference}
In this part, we provide a detailed graphical comparison on structures of TD3 and DARC which is available in Fig \ref{fig:structuretd3}. One could find that the second actor in TD3 is merely used for value correction and is wasted as the actor does not depend on it for policy update. While the critics in DARC are fully utilized where each critic is in charge of the corresponding counterpart actor and both critics are responsible for policy improvement. The critics in DARC are not isolated, but are connected with each other, i.e., they are regularized to be close and are used for value estimation correction for each other.

\begin{figure}
    \centering
    \includegraphics[scale=0.5]{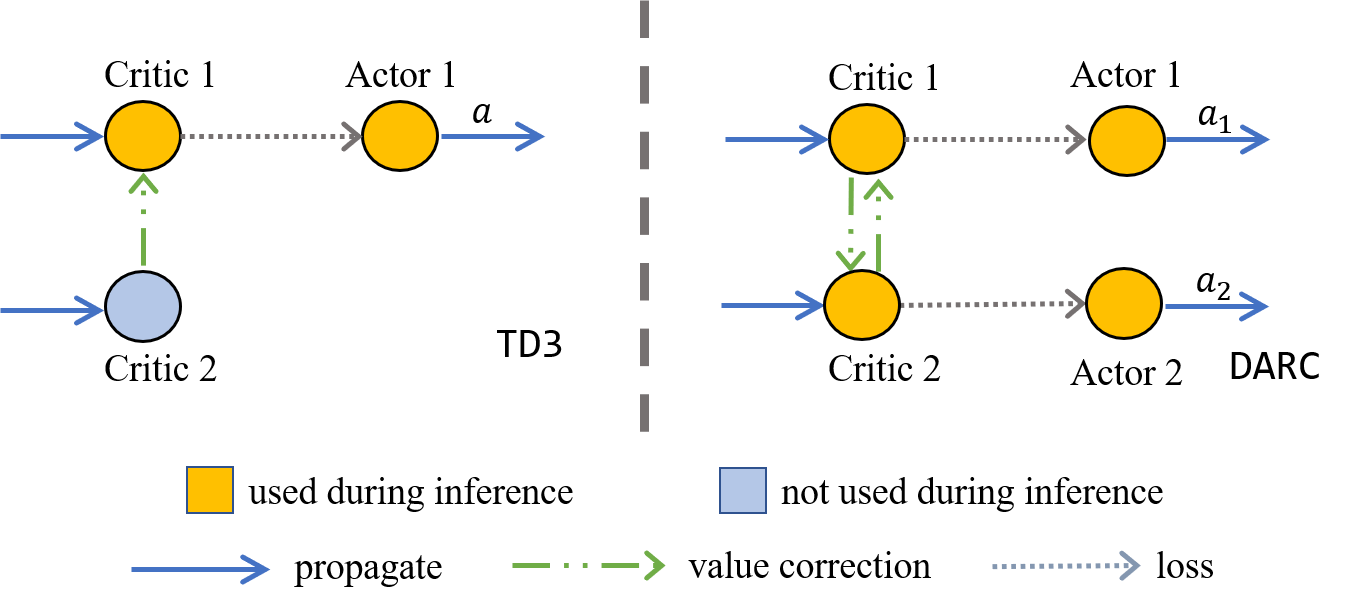}
    \caption{Structure comparison of TD3 and DARC.}
    \label{fig:structuretd3}
\end{figure}

\subsubsection{The DATD3 Algorithm}
The Double Actors Twin Delayed Deep Deterministic Policy Gradient (DATD3) algorithms consists of two actor networks and two critic networks where we leverage the double actors for value function correction so as to relieve the underestimation bias in TD3. We also encourage exploration of the agent by executing the policy that would induce higher expected future return so as to mitigate the severe \emph{pessimistic underexploration} problem in TD3 as is discussed in detail in Appendix \ref{sec:pessimistic}. The full algorithm of DATD3 is available in Algorithm \ref{alg:algdac}.

\begin{algorithm}[tb]
\caption{Double Actors Twin Delayed Deep Deterministic Policy Gradient (DATD3)}
\label{alg:algdac}
\begin{algorithmic}[1] 
\STATE Initialize critic networks $Q_{\theta_1}, Q_{\theta_2}$ and actor networks $\pi_{\phi_1}, \pi_{\phi_2}$ with random parameters $\theta_1$, $\theta_2$, $\phi_1$, $\phi_2$
\STATE Initialize target networks $\theta_1^\prime \leftarrow \theta_1, \theta_2^\prime \leftarrow \theta_2, \phi_1^\prime \leftarrow \phi_1, \phi_2^\prime \leftarrow \phi_2$ and replay buffer $\mathcal{B} = \{\}$.
\FOR{$t$ = 1 to $T$}
\STATE Select action $a$ with Gaussian exploration noise $\epsilon$ based on $\pi_{\phi_1}$ and $\pi_{\phi_2}$, $\epsilon\sim \mathcal{N}(0,\sigma)$
\STATE Execute action $a$ and observe reward $r$, new state $s^\prime$ and done flag $d$
\STATE Store transitions in the replay buffer, i.e. $\mathcal{B}\leftarrow\mathcal{B}\bigcup \{(s,a,r,s^\prime,d)\}$
\FOR{$i = 1,2$}
\STATE Sample $N$ transitions $\{(s_j,a_j,r_j,s_j^\prime,d_j)\}_{j=1}^N\sim\mathcal{B}$
\STATE $a^\prime\leftarrow \pi_{\phi_1^\prime}(s^\prime) + \epsilon$, $a^{\prime\prime} \leftarrow \pi_{\phi_2^\prime}(s^\prime) + \epsilon$, $\epsilon\sim$ clip($\mathcal{N}(0,\bar{\sigma}),-c,c$)
\STATE $Q_1(s^\prime, a^\prime) \leftarrow \min_{j=1,2}\left(Q_{\theta_j^\prime}(s^\prime, a^\prime)\right)$, $Q_2(s^\prime, a^{\prime\prime}) \leftarrow \min_{k=1,2}\left(Q_{\theta_k^\prime}(s^\prime, a^{\prime\prime}) \right)$
\STATE $\hat{V}(s^\prime)\leftarrow \max \{Q_1(s^\prime, a^\prime), Q_2(s^\prime, a^{\prime\prime})\}$
\STATE $y_t \leftarrow r + \gamma(1-d) \hat{V}(s^\prime)$
\STATE Update critic $\theta_i$ by minimizing: $\frac{1}{N}\sum_s (Q_{\theta_i}(s,a)-y_t)^2$
\STATE Update actor $\phi_i$ with policy gradient: $ \frac{1}{N}\sum_s \nabla_a Q_{\theta_i}(s,a)|_{a=\pi_{\phi_i}(s)}\nabla_{\phi_i}\pi_{\phi_i}(s)$
\STATE Update target networks: $\theta_i^\prime \leftarrow \tau\theta_i + (1-\tau)\theta_i^\prime, \phi_i^\prime\leftarrow\tau\phi_i+(1-\tau)\phi_i^\prime$
\ENDFOR
\ENDFOR
\end{algorithmic}
\end{algorithm}

\subsection{Numerical Comparison of DDPG, DADDPG, TD3, DATD3 and DARC}
\label{sec:numercomp}
In this part, we present the final mean return comparison of DDPG, DADDPG, TD3, DATD3 and DARC (3M steps for Humanoid-v2 and 1M steps for other environments) in Table \ref{tab:algcomp} to illustrate the effectiveness and advantages of utilizing double actors and critic regularization in continuous control tasks. The best results are in bold where one could find that DADDPG significantly outperforms DDPG. DADDPG is not as competitive as TD3 on many environments while it can outperform TD3 on tasks like AntPybullet, Walker2dPybullet, etc. DATD3 has similar performance as TD3 while DARC extensively outperforms all of other methods.

\begin{table}
  \caption{Numerical performance comparison on final score of DDPG, DADDPG, TD3, DATD3 and DARC. W2dPyBullet refers to Walker2dPybullet. The best results are in bold.}
  \label{tab:algcomp}
  \centering
  \begin{tabular}{lllllll}
    \toprule
    Environment   & DDPG & DADDPG (ours) & TD3 & DATD3 (ours) &  \textbf{DARC (ours)} \\
    \midrule
    Ant  & 537.67 & 2977.07 & 4164.10 & 5180.29 & \textbf{5642.33}$\pm$\textbf{188.82}   \\
    BipedalWalker & 182.16 & 219.61 & 294.08 & 305.09 &  \textbf{311.25}$\pm$\textbf{2.66}  \\
    HalfCheetah  & 10561.40 & 10644.77 & 10237.62 & 10623.96 &  \textbf{11600.74}$\pm$\textbf{499.11} \\
    Hopper & 1770.31 & 2284.35 & 3145.20 & 2822.94 &  \textbf{3577.93}$\pm$\textbf{133.97} \\
    Humanoid & 1329.62 & 4807.04 & 5992.28 & 5960.03 &  \textbf{6737.63}$\pm$\textbf{743.95} \\
    Walker2d & 1384.86 & 3939.56 & 4605.25 & 4694.75 &  \textbf{5045.36}$\pm$\textbf{548.12} \\
    AntPyBullet & 3435.11 & 3743.63  & 3683.49 & 3949.02 &  \textbf{4100.01}$\pm$\textbf{19.24} \\
    W2dPyBullet & 838.59 & 1889.49 & 1385.01 & 1777.24 &  \textbf{1902.46}$\pm$\textbf{217.25}\\
    \bottomrule
  \end{tabular}
\end{table}

\subsection{Graphical Illustration on Cross-update Scheme}
\label{sec:cross-update}
In this part, we present the graphical illustration on cross-update scheme that we adopt in DARC in Fig \ref{fig:crossupdate}. Each timestep, only one actor-critic pair is updated and the other pair is merely used for value correction. The first actor is driven by the first critic and the second critic is in charge of the second actor. By introducing such scheme, the delayed update in actors are fulfilled, which is beneficial to policy smoothing as is discussed in TD3 \cite{fujimoto2018addressing}.

\begin{figure}
    \centering
    \includegraphics[scale=0.4]{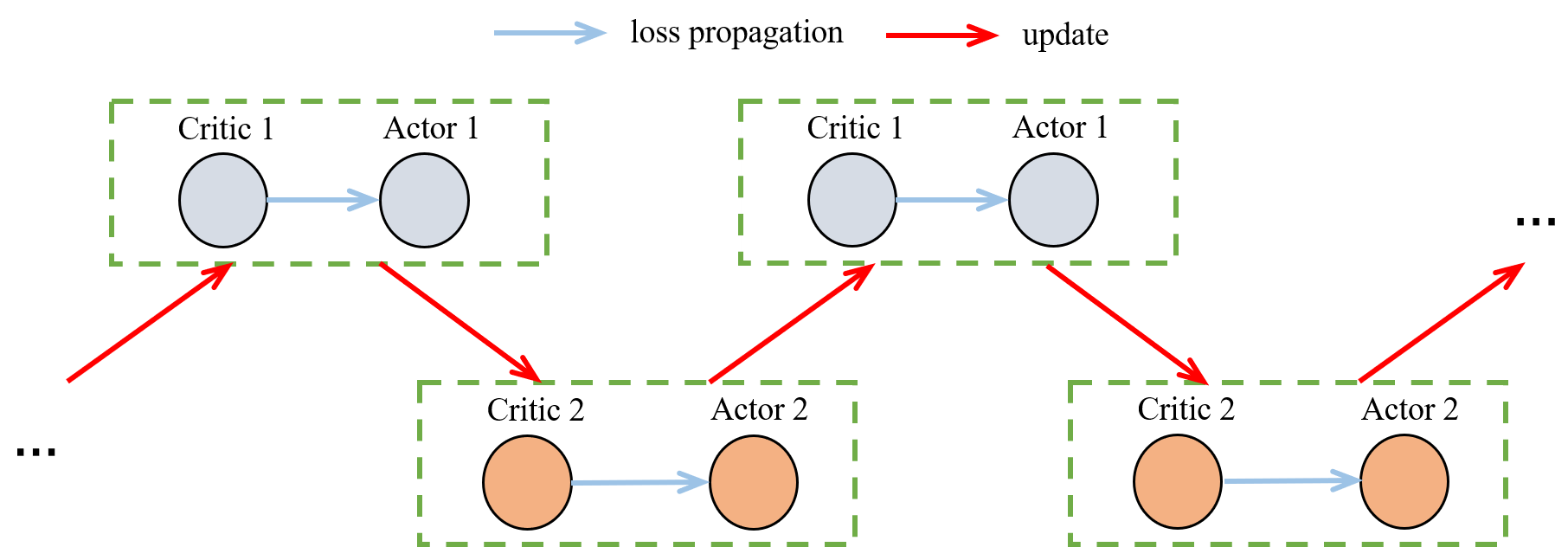}
    \caption{Graphical illustration on the cross-update scheme.}
    \label{fig:crossupdate}
\end{figure}

\section{Proofs in Section 3}
\subsection{Proof of Theorem 1}
\label{sec:theorem1}
\begin{theorem}
\label{appendixtheo:daddpg}
Denote the value estimation bias deviating the true value induced by $\mathcal{T}$ as bias($\mathcal{T}$) = $\mathbb{E}[\mathcal{T}(s^\prime)] - \mathbb{E}[Q_{\theta^{\rm{true}}}(s^\prime,\pi_{\phi^\prime}(s^\prime))]$, then we have \rm{bias}($\mathcal{T}_{\mathrm{DADDPG}}$) $\le$ \rm{bias}($\mathcal{T}_{\mathrm{DDPG}}$).
\end{theorem}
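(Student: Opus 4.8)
The plan is to exploit the fact that both biases are measured against the \emph{same} baseline, namely $\mathbb{E}[Q_{\theta^{\mathrm{true}}}(s^\prime,\pi_{\phi^\prime}(s^\prime))]$, so that the claimed inequality collapses to a comparison of the two target estimators alone. Subtracting the common true-value term, the inequality $\mathrm{bias}(\mathcal{T}_{\mathrm{DADDPG}}) \le \mathrm{bias}(\mathcal{T}_{\mathrm{DDPG}})$ is equivalent to $\mathbb{E}[\mathcal{T}_{\mathrm{DADDPG}}(s^\prime)] \le \mathbb{E}[\mathcal{T}_{\mathrm{DDPG}}(s^\prime)]$. First I would recall from Section \ref{sec:pre} that $\mathcal{T}_{\mathrm{DDPG}}(s^\prime)=Q_{\theta^\prime}(s^\prime,\pi_{\phi^\prime}(s^\prime))$, and from Eq. (\ref{eq:daddpg}) that $\mathcal{T}_{\mathrm{DADDPG}}(s^\prime)=\min_{i=1,2}Q_{\theta^\prime}(s^\prime,\pi_{\phi_i^\prime}(s^\prime))$, both evaluated under the \emph{same} critic parameters $\theta^\prime$.

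The key step is a pointwise inequality driven purely by the $\min$ operator: for every state $s^\prime$ and every index $j\in\{1,2\}$ we have $\min_{i=1,2}Q_{\theta^\prime}(s^\prime,\pi_{\phi_i^\prime}(s^\prime)) \le Q_{\theta^\prime}(s^\prime,\pi_{\phi_j^\prime}(s^\prime))$. Identifying the DDPG actor $\pi_{\phi^\prime}$ with one of the two double actors in the DADDPG construction (say $\pi_{\phi_1^\prime}$) yields $\mathcal{T}_{\mathrm{DADDPG}}(s^\prime) \le \mathcal{T}_{\mathrm{DDPG}}(s^\prime)$ for all $s^\prime$. Taking expectations over $s^\prime$ and invoking monotonicity of expectation gives $\mathbb{E}[\mathcal{T}_{\mathrm{DADDPG}}(s^\prime)] \le \mathbb{E}[\mathcal{T}_{\mathrm{DDPG}}(s^\prime)]$, which is exactly the reduced statement, completing the argument.

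The main obstacle here is not analytical but a matter of setting up the comparison correctly: I must justify that the two target functions share the identical critic $Q_{\theta^\prime}$ and that the reference actor $\pi_{\phi^\prime}$ coincides with one of $\pi_{\phi_1^\prime},\pi_{\phi_2^\prime}$, so that the $\min$ is genuinely dominated by the DDPG estimate. Once this identification is in place, the argument requires no structural hypothesis on how the two actors relate to each other — this is precisely the content of the remark that the bound holds ``without any special requirement or assumption on double actors,'' since the $\min$ inequality remains valid regardless of the relative quality of $\pi_{\phi_1^\prime}$ and $\pi_{\phi_2^\prime}$. I would therefore front-load the proof with this modeling clarification and then let the one-line $\min$ bound carry the rest.
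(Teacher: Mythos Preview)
Your proposal is correct and follows essentially the same route as the paper: reduce the bias inequality to $\mathbb{E}[\mathcal{T}_{\mathrm{DADDPG}}(s^\prime)] \le \mathbb{E}[\mathcal{T}_{\mathrm{DDPG}}(s^\prime)]$, observe the pointwise inequality $\min_{i=1,2}Q_{\theta^\prime}(s^\prime,\pi_{\phi_i^\prime}(s^\prime)) \le Q_{\theta^\prime}(s^\prime,\pi_{\phi_1^\prime}(s^\prime))$, and take expectations. The paper's proof is slightly terser (it implicitly identifies $\pi_{\phi^\prime}$ with $\pi_{\phi_1^\prime}$ without comment), whereas you make that identification explicit; otherwise the arguments coincide.
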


\begin{proof}
By definition, we have
\begin{equation}
    \mathcal{T}_{\mathrm{DDPG}}(s^\prime) = Q_{\theta^\prime}(s^\prime,\pi_{\phi_1}(s^\prime)),
\end{equation}
and,
\begin{equation}
    \mathcal{T}_{\mathrm{DADDPG}}(s^\prime) = \min \{ Q_{\theta^\prime}(s^\prime,\pi_{\phi_1^\prime}(s^\prime)), Q_{\theta^\prime}(s^\prime,\pi_{\phi_2^\prime}(s^\prime)) \}.
\end{equation} 

Obviously, we have 
\begin{equation}
    \mathcal{T}_{\mathrm{DADDPG}}(s^\prime) \le \mathcal{T}_{\mathrm{DDPG}}(s^\prime).
\end{equation}

Therefore, $\mathbb{E}[\mathcal{T}_{\mathrm{DDPG}}(s^\prime)]\ge\mathbb{E}[\mathcal{T}_{\mathrm{DADDPG}}(s^\prime)]$, and naturally, bias($\mathcal{T}_{\mathrm{DADDPG}}$) $\le$ bias($\mathcal{T}_{\mathrm{DDPG}}$).

\end{proof}

\noindent \textbf{Remark:} DADDPG is actually DDPG with dual-path, where the agent correct its value estimation according to the smaller one conservatively. Hence either path of DADDPG induces a DDPG value estimator. Like TD3, DADDPG is naturally designed to mitigate the overestimation bias in DDPG, which is also easy to implement. Note that there does not exist any assumptions or requirements in this theorem, which sheds light to the benefits of employing double actors for value estimation correction.

\subsection{Proof of Theorem 2}
\label{sec:theorem2}
\begin{theorem}
\label{appendixtheo:dac}
The bias of DATD3 is larger than that of TD3, i.e. \rm{bias}($\mathcal{T}_{\mathrm{DATD3}}$) $\ge$ \rm{bias}($\mathcal{T}_{\mathrm{TD3}}$).
\end{theorem}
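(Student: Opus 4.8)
The plan is to mirror the structure of the proof of Theorem~\ref{appendixtheo:daddpg}, but now the relevant comparison runs in the opposite direction because TD3 suffers from \emph{underestimation} rather than overestimation. First I would write out the two estimators explicitly from the definitions in Section~\ref{sec:doubleactorvaluecorrection}. By definition $\mathcal{T}_{\mathrm{TD3}}(s^\prime)=\min_{i=1,2}Q_{\theta_i^\prime}(s^\prime,\pi_{\phi_1^\prime}(s^\prime))$, which is exactly the quantity $Q_1(s^\prime,\pi_{\phi_1^\prime}(s^\prime))$ in our notation (the minimum over the two critics evaluated at the single TD3 actor). Meanwhile
\begin{equation*}
\mathcal{T}_{\mathrm{DATD3}}(s^\prime)=\max\bigl\{Q_1(s^\prime,\pi_{\phi_1^\prime}(s^\prime)),\,Q_2(s^\prime,\pi_{\phi_2^\prime}(s^\prime))\bigr\},
\end{equation*}
where each $Q_i$ is itself a clipped-double-Q minimum. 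The key pointwise observation is that a maximum of a collection of terms is at least as large as any single member of that collection; taking the first member gives $\mathcal{T}_{\mathrm{DATD3}}(s^\prime)\ge Q_1(s^\prime,\pi_{\phi_1^\prime}(s^\prime))=\mathcal{T}_{\mathrm{TD3}}(s^\prime)$.

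The second step is to identify $\pi_{\phi_1^\prime}$ in the TD3 estimator with the first actor of DATD3, so that the two $Q_1$ expressions literally coincide and the $\max$ in DATD3 is taken over this shared term together with the extra branch $Q_2(s^\prime,\pi_{\phi_2^\prime}(s^\prime))$ coming from the second actor. This is the crux that makes the inequality clean: the second actor only \emph{adds} a candidate to the maximization, so it can only raise (never lower) the estimate relative to TD3's single-actor minimum.

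The third step is to lift the pointwise inequality to the bias inequality. Taking expectations preserves the order, so $\mathbb{E}[\mathcal{T}_{\mathrm{DATD3}}(s^\prime)]\ge\mathbb{E}[\mathcal{T}_{\mathrm{TD3}}(s^\prime)]$. Subtracting the common reference term $\mathbb{E}[Q_{\theta^{\mathrm{true}}}(s^\prime,\pi_{\phi^\prime}(s^\prime))]$ from both sides yields $\mathrm{bias}(\mathcal{T}_{\mathrm{DATD3}})\ge\mathrm{bias}(\mathcal{T}_{\mathrm{TD3}})$, which is the claim. Since TD3 is known to underestimate, a strictly larger value estimate is a strictly smaller (in magnitude) underestimation, so this inequality is exactly the desired bias-alleviation statement.

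The main obstacle, as in Theorem~\ref{appendixtheo:daddpg}, is not analytical difficulty but a definitional subtlety: the bias function compares each estimator against a common true-value baseline $Q_{\theta^{\mathrm{true}}}(s^\prime,\pi_{\phi^\prime}(s^\prime))$, yet the two estimators nominally involve different actors ($\pi_{\phi_1^\prime}$ versus $\pi_{\phi_1^\prime},\pi_{\phi_2^\prime}$). I would therefore be careful to state that we hold the reference term fixed across the comparison, treating $\pi_{\phi^\prime}$ in the baseline as the common policy, so that the difference of biases reduces purely to the difference of the estimators $\mathbb{E}[\mathcal{T}_{\mathrm{DATD3}}]-\mathbb{E}[\mathcal{T}_{\mathrm{TD3}}]\ge 0$. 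Beyond handling this bookkeeping, the argument requires no assumptions on the networks and follows entirely from the elementary $\max\{a,b\}\ge a$ inequality.
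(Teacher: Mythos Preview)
Your proposal is correct and matches the paper's own proof essentially line for line: write out both estimators, observe that $\mathcal{T}_{\mathrm{TD3}}(s^\prime)$ is precisely the first argument of the $\max$ defining $\mathcal{T}_{\mathrm{DATD3}}(s^\prime)$, conclude the pointwise inequality from $\max\{a,b\}\ge a$, and take expectations. The paper's version is terser and omits your discussion of the common baseline in the bias definition, but the argument is the same.
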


\begin{proof}
By definition, we have
\begin{equation}
    \mathcal{T}_{\mathrm{TD3}}(s^\prime) = \min \{ Q_{\theta_1^\prime}(s^\prime,\pi_{\phi_1^\prime}(s^\prime)), Q_{\theta_2^\prime}(s^\prime,\pi_{\phi_1^\prime}(s^\prime)) \}.
\end{equation}

similarly, we have
\begin{equation}
    \mathcal{T}_{\mathrm{DATD3}}(s^\prime) = \max \left\{ \min_{i=1,2} Q_{\theta_i^\prime}(s^\prime,\pi_{\phi_1^\prime}(s^\prime)), \min_{j=1,2} Q_{\theta_j^\prime}(s^\prime,\pi_{\phi_2^\prime}(s^\prime)) \right\}
\end{equation}

Therefore $\mathcal{T}_{\mathrm{DATD3}}(s^\prime) \ge \mathcal{T}_{\mathrm{TD3}}(s^\prime)$, which leads to $\mathbb{E}[\mathcal{T}_{\mathrm{DATD3}}(s^\prime)] \ge \mathbb{E}[\mathcal{T}_{\mathrm{TD3}}(s^\prime)]$. Hence, bias($\mathcal{T}_{\mathrm{DATD3}}$) $\ge$ bias($\mathcal{T}_{\mathrm{TD3}}$).

\end{proof}

\noindent \textbf{Remark:} DATD3 consists of double actors and double critics and we correct the value estimates by taking minimum of double critics first and then take maximum value estimation from double actors and use that as final value estimate. The major reason that we correct the value estimation in this way lies in the fact that DADDPG without exploration significantly underperforms TD3 (see Appendix \ref{sec:daddpgtd3}), indicating that applying critics for value correction first is better than that of using double actors for value correction first. Furthermore, there does not exist any guarantee in preserving larger bias compared with TD3 if we take maximum value estimate over critics first, which may also introduce overestimation issues. For each actor in DATD3, it is actually a TD3-style structure where the actors share identical critic networks while only its counterpart critic is used for loss propagation. DATD3 naturally ease the underestimation bias in TD3 and hence can improve its performance on broad tasks.

\section{Bias Comparison on Broader Tasks}
\subsection{Bias Alleviation with Double Actors}
\label{sec:biaswithdoubleactors}
In this section, we include bias comparison of DADDPG and DDPG on broader MuJoCo environments, Ant-v2 and Hopper-v2. We also compare DATD3 and TD3 on these environment to show the generality of the bias alleviation property with double actors. Similarly, the value estimates are calculated by averaging over $1000$ states sampled from the replay buffer each tiemstep and the true value estimations are estimated by rolling out the current policy using the sampled states as the initial states and averaging the discounted long-term rewards. The experimental setting is identical as in Section \ref{sec:experiment} of the main text. The results are shown in Fig \ref{fig:biascomp} where one can see that double actors on single critic significantly relieve the overestimation bias in DDPG on both tasks (see Fig \ref{fig:daddpgbroader}) and double actors on double critics significantly relieve the underestimation bias in TD3 (see Fig \ref{fig:datd3broader}).

\begin{figure}
    \centering
    \subfigure[DADDPG vs DDPG]{
    \label{fig:daddpgbroader}
    \includegraphics[scale=0.32]{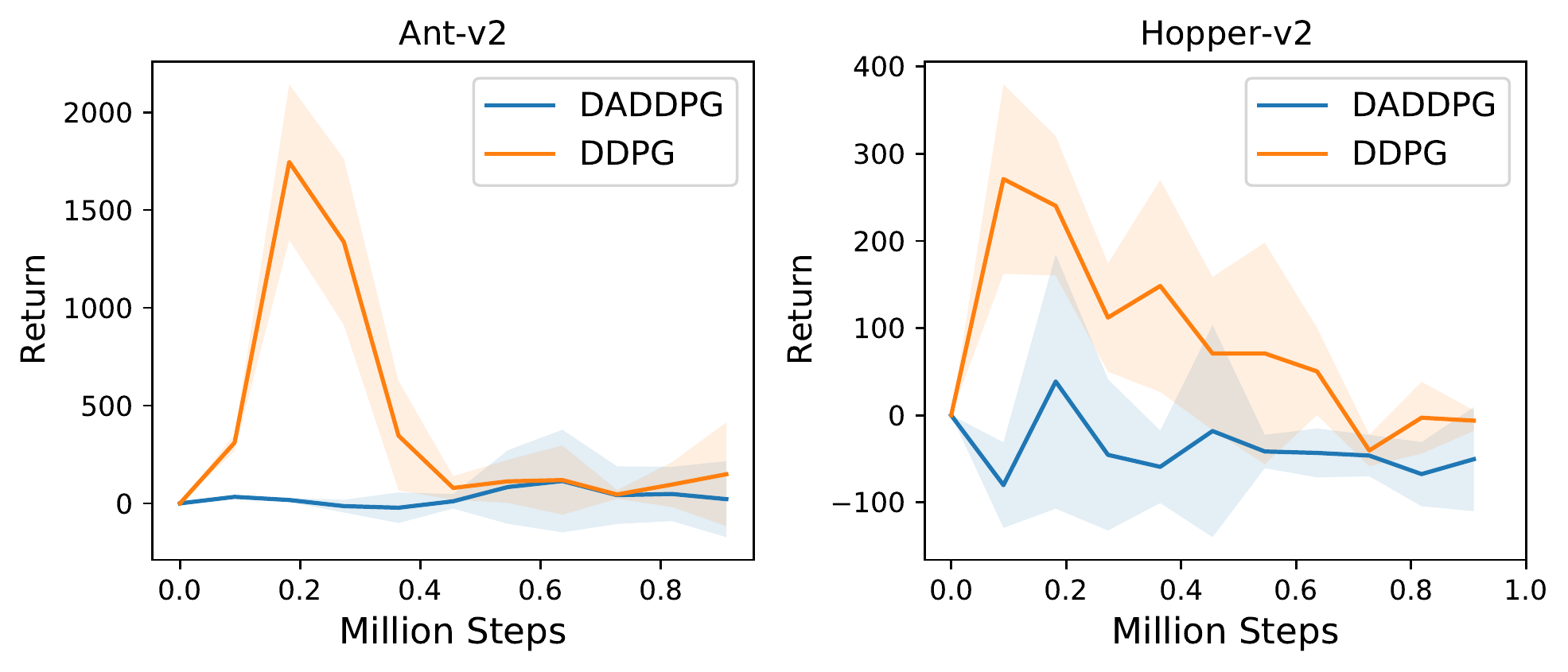}
    }\hspace{-2mm}
    \subfigure[DATD3 vs TD3]{
    \label{fig:datd3broader}
    \includegraphics[scale=0.32]{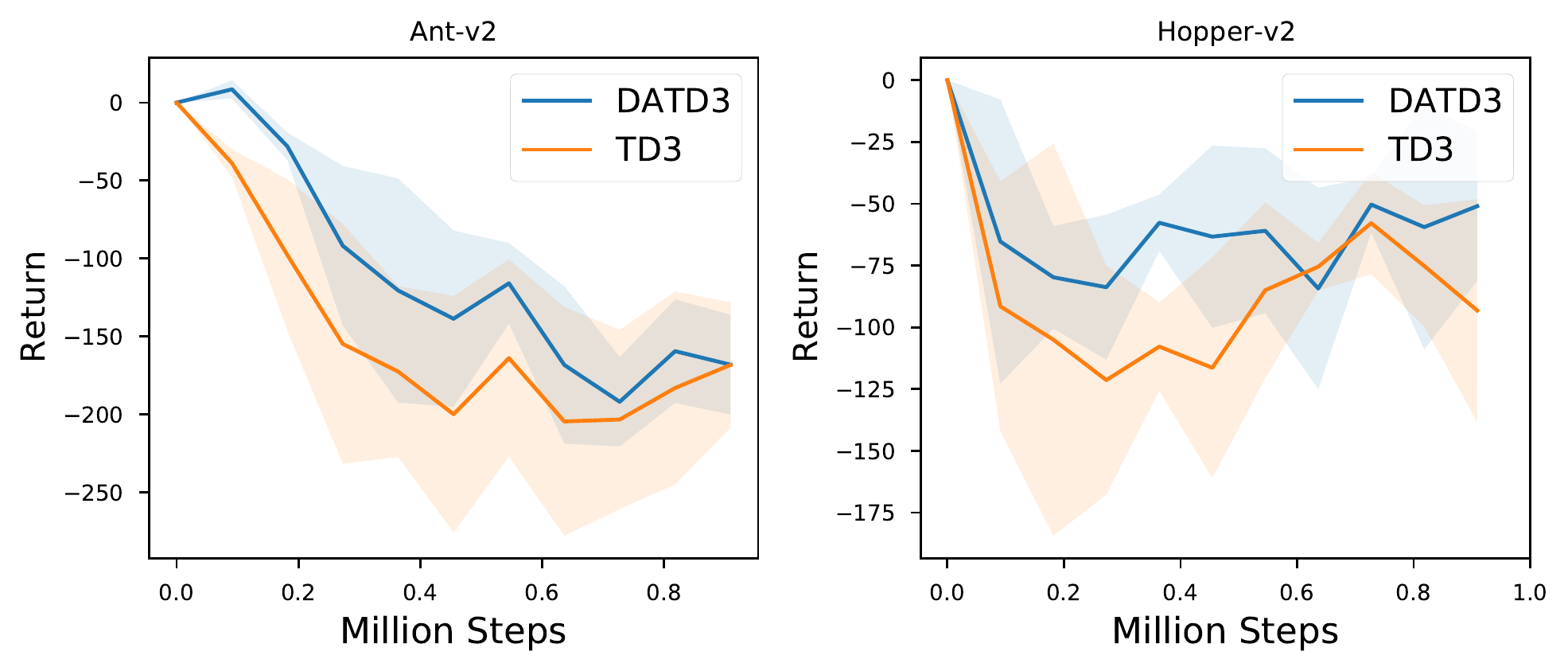}
    }\hspace{-2mm}
    \caption{Estimation bias comparison on Ant-v2 and Hopper-v2.}
    \label{fig:biascomp}
\end{figure}

\subsection{Bias Alleviation with DARC}
\label{sec:biasdarc}
In this section, we compare the estimation bias of DARC with DATD3 and TD3 on three MuJoCo environments, Ant-v2, Hopper-v2 and Walker2d-v2 and the result is shown in Fig \ref{fig:dracbias} where one could find that the bias of DARC is more conservative than that of DATD3 due to the mechanism of critic regularization. It is worth noting that it would be hard to give explicit relationship between the bias of DATD3 and DARC. One cannot simply conclude that the bias of DARC is smaller than DATD3 based on $\hat{V}(s;\nu)\le \hat{V}(s;\nu=0)$ because the critics in DARC are no longer independent but correlated with each other. DARC may preserves larger bias than that of TD3 (e.g., on Ant-v2 environment, see Fig \ref{fig:dracbias} for more details). DARC estimates the value function in a much softer way than that of DATD3 which is also more flexible on different tasks. Note that always taking maximum may cause slight overestimation bias which would do harm to the performance of the agent. DARC is also more stable and robust with critic regularization.

\begin{figure}
    \centering
    \includegraphics[scale=0.4]{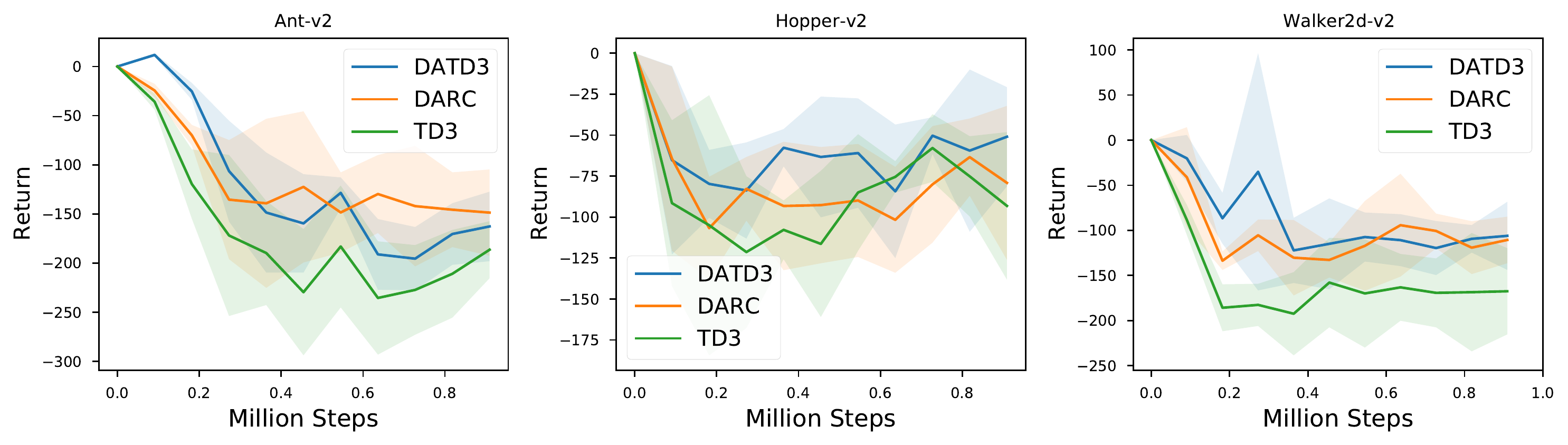}
    \caption{Bias comparison of DARC with DATD3 and TD3 on three MuJoCo environments.}
    \label{fig:dracbias}
\end{figure}

Moreover, we show in Table \ref{tab:criticreg} the mean critic deviance  $\mathbb{E}[Q_{\theta_1}(s,a) - Q_{\theta_2}(s,a)]$ in TD3 and DARC over 5 independent runs on three typical MuJoCo environments: Ant-v2, Hopper-v2 and Walker2d-v2 (1M steps). Denote the mean critic deviance from TD3 and DARC as $e_t,e_d$ respectively and then the reduction in Table \ref{tab:criticreg} is calculated via: $\mathrm{reduction} = \dfrac{e_t-e_d}{e_t}$. The results show that critic regularization significantly reduces the critic deviance, i.e., the value estimates from double critics are close to each other in DARC, leading to smaller uncertainty in value estimation from double critics.

\begin{table*}
\caption{Critic deviance reduction with critic regularization.}
\label{tab:criticreg}
\centering
\begin{tabular}{c|c|c|c}
\toprule
\textbf{Environment}  & \textbf{TD3} & \textbf{DARC} & \textbf{Reduction} \\
\midrule
Ant  & 1.6504 & \textbf{0.6062} & \textbf{63.27\%} \\
Hopper & 0.8844 & \textbf{0.0214} & \textbf{97.58\%} \\
Walker2d & 0.5711 & \textbf{0.0842} & \textbf{85.26\%} \\
\bottomrule
\end{tabular}
\end{table*}

\section{Proofs in Section 4}
\label{sec:theorem3}
\begin{theorem}
\label{appendixtheo:valueiteration}
Assume that the value error is bounded at the t-th iteration, i.e., $\|\max_{a\in\mathcal{A}} Q_t(s,a) - \hat{V}_t(s;\nu=0)\|_\infty \le \epsilon_t$ where $\epsilon_t$ is a time-variant upper bound. Assume that the policy execution error is bounded, i.e., $\|Q_{\theta_i}(s,\pi_{\phi_1}(s)) - Q_{\theta_i}(s,\pi_{\phi_2}(s))\|_\infty\le\epsilon_\pi, i=1,2$, and the critic deviance error is also bounded, i.e., $\forall s,a, \|Q_{\theta_1}(s,a) - Q_{\theta_2}(s,a)\|_\infty\le \epsilon_d$. Then for any iteration $t$, the difference between the optimal value function $V^*(s)$ and the value function $V(s)$ induced by the double actors satisfies:
\begin{equation*}
    \| V_t(s) - V^*(s)\|_\infty \le \gamma^t\|V_0(s) - V^*(s)\|_\infty + \sum_{k=0}^t \gamma^k \epsilon_k +  \dfrac{\nu}{1-\gamma}(2\epsilon_d+\epsilon_\pi).
\end{equation*}
\end{theorem}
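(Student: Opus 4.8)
The plan is to treat the double-actor update in Algorithm~\ref{alg:algdarc} as an instance of approximate value iteration and to control, iteration by iteration, the error that the convex combination of Eq.~(\ref{eq:convexcomb}) introduces relative to the optimal Bellman backup. Writing $\Delta_t := \|V_t(s) - V^*(s)\|_\infty$, I would first establish the one-step recursion
\begin{equation*}
\Delta_{t+1} \le \gamma\,\Delta_t + \epsilon_t + \nu(2\epsilon_d + \epsilon_\pi),
\end{equation*}
and then unroll it, letting the constant term sum as a geometric series with ratio $\gamma$ to produce the $\tfrac{\nu}{1-\gamma}(2\epsilon_d+\epsilon_\pi)$ factor, while the value errors $\epsilon_k$ accumulate with discount weights to give the $\sum_k \gamma^k \epsilon_k$ term and the initial gap contracts as $\gamma^t\|V_0 - V^*\|_\infty$.

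The key algebraic step is to relate the softened estimate to the $\nu=0$ estimate. Abbreviating the two per-actor values from Section~\ref{sec:doubleactorvaluecorrection} as $q_1 := Q_1(s,\pi_{\phi_1}(s))$ and $q_2 := Q_2(s,\pi_{\phi_2}(s))$, the elementary identity $\nu\min\{q_1,q_2\} + (1-\nu)\max\{q_1,q_2\} = \max\{q_1,q_2\} - \nu|q_1 - q_2|$ yields $\hat V_t(s;\nu) = \hat V_t(s;\nu=0) - \nu|q_1 - q_2|$. It then remains to bound $|q_1 - q_2|$. I would insert the cross terms $Q_{\theta_1}(s,\pi_{\phi_1}(s))$ and $Q_{\theta_1}(s,\pi_{\phi_2}(s))$ and apply the triangle inequality: the middle difference is controlled by the policy execution error $\epsilon_\pi$ (Definition~\ref{def:policyerror}), while each outer difference — replacing a per-actor min-over-critics by a single critic's value — perturbs the value by at most the critic deviance $\epsilon_d$ (Definition~\ref{def:estimationerror}). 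This gives $\|q_1 - q_2\|_\infty \le 2\epsilon_d + \epsilon_\pi$, hence $\|\hat V_t(s;\nu) - \hat V_t(s;\nu=0)\|_\infty \le \nu(2\epsilon_d + \epsilon_\pi)$.

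Combining this with the value-error assumption closes the per-step bound. Adding and subtracting $\max_a Q_t(s,a)$ and invoking Definition~\ref{def:valueerror} gives $|\hat V_t(s;\nu) - \max_a Q_t(s,a)| \le \nu(2\epsilon_d+\epsilon_\pi) + \epsilon_t$. Finally, identifying $V_t(s) = \hat V_t(s;\nu)$ with the underlying backup $Q_t(s,a) = r(s,a) + \gamma\mathbb{E}_{s'}[V_{t-1}(s')]$, the non-expansiveness of the $\max$ operator and the $\gamma$-contraction of the Bellman backup yield $|\max_a Q_t(s,a) - \max_a Q^*(s,a)| \le \gamma\|V_{t-1} - V^*\|_\infty$; a triangle inequality then delivers the claimed recursion, which unrolls to the stated bound.

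The main obstacle I anticipate is the bound on $|q_1 - q_2|$: one must choose the intermediate terms so that both a single critic's values under the two distinct policies and the two min-over-critics gaps appear, and argue carefully that taking the minimum over the two critics moves each per-actor value by no more than $\epsilon_d$ — this is where the factor of $2$ originates. A secondary subtlety is confirming that the value-error assumption is genuinely required: it is precisely what licenses replacing the two-critic maximum $\hat V_t(s;\nu=0)$ by the true $\max_a Q_t(s,a)$ in the contraction step, so the bound cannot be trivially sharpened by setting $\nu=0$, in keeping with the Remark that follows the theorem.
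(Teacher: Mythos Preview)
Your proposal is correct and follows essentially the same route as the paper's proof: the paper also isolates a lemma showing $\|Q_1(s,a_1)-Q_2(s,a_2)\|_\infty\le 2\epsilon_d+\epsilon_\pi$ via the same triangle-inequality insertion of intermediate single-critic terms, then decomposes $|\hat V(s;\nu)-\max_a Q^*(s,a)|$ through $\max_a Q_{t+1}(s,a)$, applies the value-error assumption and the $\gamma$-contraction of the Bellman backup, and unrolls the resulting one-step recursion. Your explicit identity $\nu\min\{q_1,q_2\}+(1-\nu)\max\{q_1,q_2\}=\max\{q_1,q_2\}-\nu|q_1-q_2|$ is a slightly cleaner way to phrase the same decomposition the paper carries out term by term.
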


\begin{proof}
Denote $a_1=\pi_{\phi_1}(s), a_2 = \pi_{\phi_2}(s)$. The target value is updated by a convex combination of value estimations from different policies, i.e.,
\begin{equation*}
    \hat{V}(s;\nu) = \nu \min (Q_1(s,a_1), Q_2(s,a_2))+(1-\nu)\max (Q_1(s,a_1), Q_2(s,a_2))
\end{equation*}

where $Q_1(s,a_1) = \min_{i=1,2}\left\{Q_{\theta_i}(s,a_1)\right\}$, is the minimal value of double critics under policy $\pi_{\phi_1}(s)$ and $Q_2(s,a_2) = \min_{j=1,2}\left\{Q_{\theta_j}(s,a_2)\right\}$ is the minimal value estimation of double critics under policy $\pi_{\phi_2}(s)$.

Then we have the following lemma:

\begin{lemma}
\label{lemma:q1q2}
For any state $s$, the distance between $Q_1(s,a_1)$ and $Q_2(s,a_2)$ is bounded by $2\epsilon_d+\epsilon_\pi$, i.e. $\| Q_2(s,a_2) - Q_1(s,a_1) \|_\infty \le 2\epsilon_d+\epsilon_\pi$, where $a_1=\pi_{\phi_1}(s), a_2 = \pi_{\phi_2}(s)$.
\end{lemma}

\begin{proof}
By the definition of $Q_1(s,a_1), Q_2(s,a_2)$, we have
\begin{equation*}
    \begin{aligned}
    &\quad \; \| Q_2(s,a_2) - Q_1(s,a_1) \|_\infty \\
    &\le \| Q_2(s,a_2) - Q_{\theta_2}(s,a_2) \|_\infty + \|  Q_{\theta_2}(s,a_2) - Q_1(s,a_1)\|_\infty \\
    &\le \| Q_2(s,a_2) - Q_{\theta_2}(s,a_2) \|_\infty + \| Q_{\theta_2}(s,a_2) - Q_{\theta_2}(s,a_1) \|_\infty + \| Q_{\theta_2}(s,a_1) - Q_1(s,a_1) \|_\infty \\
    &\le 2\epsilon_d + \epsilon_\pi.
    \end{aligned}
\end{equation*}

Note that $\| Q_2(s,a_2) - Q_{\theta_2}(s,a_2) \|_\infty \le \epsilon_d$ because $Q_2(s,a_2)$ can only be chosen between $Q_{\theta_1}(s,a_2)$ and $Q_{\theta_2}(s,a_2)$, and so does the error bound between $Q_{\theta_2}(s,a_1)$ and $Q_1(s,a_1)$.
\end{proof}

Notice that the max operator is non-expansive, then at the $(t+1)$-th iteration, we have
\begin{equation*}
    \begin{aligned}
    &\quad \; |V_{t+1}(s) - V^*(s)| \\
    &=|\hat{V}(s;\nu) - \max_{a\in\mathcal{A}}Q^*(s,a)| \\
    &\le |\hat{V}(s;\nu) - \max_{a\in\mathcal{A}}Q_{t+1}(s,a)| + |\max_{a\in\mathcal{A}}Q_{t+1}(s,a) - \max_{a\in\mathcal{A}}Q^*(s,a)| \\
    &= | \nu \min \{Q_1(s,a_1), Q_2(s,a_2)\}+(1-\nu)\max \{Q_1(s,a_1), Q_2(s,a_2)\} - \max_{a\in\mathcal{A}}Q_{t+1}(s,a)| \\
    &\qquad + |\max_{a\in\mathcal{A}}Q_{t+1}(s,a) - \max_{a\in\mathcal{A}}Q^*(s,a)| \\
    &\le | \max \{Q_1(s,a_1), Q_2(s,a_2)\} - \max_{a\in\mathcal{A}}Q_{t+1}(s,a) | + \nu | \min \{Q_1(s,a_1), Q_2(s,a_2)\} \\
    & \qquad - \max \{Q_1(s,a_1), Q_2(s,a_2)\} |  + \max_{a\in\mathcal{A}} |Q_{t+1}(s,a) - Q^*(s,a)| \\
    &\overset{(i)}{\le} \epsilon_t + \nu | \min \{Q_1(s,a_1), Q_2(s,a_2)\} - \max \{Q_1(s,a_1), Q_2(s,a_2)\} | + \gamma \max_{s\in\mathcal{S}} |V_t(s) - V^*(s)| \\
    \end{aligned}
\end{equation*}

where (i) holds due to the fact that
\begin{equation*}
    | Q_{t+1}(s,a) - Q^*(s,a) | \le \gamma \max_{s^\prime\in\mathcal{S}}| V_t(s^\prime) - V^*(s^\prime) |.
\end{equation*}

Consider $\infty$-norm and we have

\begin{equation*}
    \begin{aligned}
    & \| V_{t+1}(s)-V^*(s) \|_\infty \le \epsilon_t + \nu \| \min \{Q_1(s,a_1), Q_2(s,a_2)\} - \max \{Q_1(s,a_1), Q_2(s,a_2)\} \|_\infty \\
    &\qquad \qquad \qquad \qquad \qquad \qquad + \gamma \|V_t(s) - V^*(s)\|_\infty
    \end{aligned}
\end{equation*}

As we only take min and max between two numbers, then

\begin{equation*}
    \begin{aligned}
    \| \min \{Q_1(s,a_1), Q_2(s,a_2)\} - \max \{Q_1(s,a_1), Q_2(s,a_2)\} \|_\infty = \| Q_2(s,a_2) - Q_1(s,a_1) \|_\infty.
    \end{aligned}
\end{equation*}

Then by using Lemma \ref{lemma:q1q2}, we have
$$
\| \min \{Q_1(s,a_1), Q_2(s,a_2)\} - \max \{Q_1(s,a_1), Q_2(s,a_2)\} \|_\infty \le 2\epsilon_d+\epsilon_\pi .
$$

Taken together, we get the relationship between $(t+1)$-th iteration and $t$-th iteration:
\begin{equation*}
    \|V_{t+1}(s) - V^*(s)\|_\infty \le \epsilon_t + \nu (2\epsilon_d + \epsilon_\pi) + \gamma \|V_t(s) - V^*(s)\|_\infty.
\end{equation*}

Doing iteration and we could reach to the desired conclusion:
\begin{equation*}
    \begin{aligned}
    \| V_t(s) - V^*(s) \|_\infty &\le \gamma^t\|V_0(s) - V^*(s)\|_\infty + \sum_{k=1}^t \gamma^k (\epsilon_k + \nu(2\epsilon_d + \epsilon_\pi)) \\
    &= \gamma^t\|V_0(s) - V^*(s)\|_\infty + \sum_{k=1}^t \gamma^k\epsilon_k + \dfrac{\nu(2\epsilon_d+\epsilon_\pi)}{1-\gamma}.
    \end{aligned}
\end{equation*}
\end{proof}

\section{Hyperparameter Setup}
\label{sec:hyperparameters}
The network configuration for DDPG, TD3, and DARC are similar, where we use fine-tuned DDPG as is suggested in TD3 instead of the vanilla DDPG. The detailed hyperparameters for baseline algorithms and DARC are shown in Table \ref{tab:hyperparameter} where we adopt different hyperparameters for Humanoid-v2 environment as all algorithms fail in this task if we use the default hyperparameters setup as other environments. We instead use the parameters suggested in TD3 \cite{TD3} such that algorithms could learn well in this environment. For simplicity, we set the regularization parameter $\lambda$ as $0.005$. The weighting coefficient $\nu$ differs in different tasks: $0.22$ for Ant-v2, $0.1$ for HalfCheetah-v2, $0.15$ for Hopper-v2, $0.05$ for Humanoid-v2, $0.12$ for Walker2d, $0.2$ for AntPybullet (AntMuJoCoEnv-v0), $0.15$ for Walker2dPybullet (Walker2DMuJoCoEnv-v0), and a relatively large $\nu = 0.4$ for BipedalWalker-v3. The random seeds we adopt are $1$-$5$ for simplicity.

\begin{table*}
\caption{Hyperparameters setup for baseline algorithms and DARC}
\label{tab:hyperparameter}
\centering
\begin{tabular}{lrr}
\toprule
\textbf{Hyperparameter}  & \textbf{Humanoid-v2} & \textbf{Other environments} \\
\midrule
Shared & \\
\qquad Actor network  & $(256,256)$ & $(400,300)$ \\
\qquad Critic network & $(256,256)$ & $(400,300)$ \\
\qquad Batch size & $256$ & $100$ \\
\qquad Learning rate & $3\times 10^{-4}$ & $10^{-3}$ \\
\qquad Optimizer & \multicolumn{2}{c}{Adam} \\
\qquad Discount factor & \multicolumn{2}{c}{$0.99$} \\
\qquad Replay buffer size & \multicolumn{2}{c}{$10^6$}  \\
\qquad Warmup steps & \multicolumn{2}{c}{$10^4$} \\
\qquad Exploration noise & \multicolumn{2}{c}{$\mathcal{N}(0,0.1)$} \\
\qquad Noise clip & \multicolumn{2}{c}{$0.5$} \\
\qquad Target update rate & \multicolumn{2}{c}{$5\times 10^{-3}$} \\

\midrule
TD3  & \\
\qquad Target update interval & \multicolumn{2}{c}{$2$} \\
\qquad Target noise & \multicolumn{2}{c}{$0.2$} \\
\midrule
SAC & \\
\qquad Reward scale & \multicolumn{2}{c}{$1$} \\
\qquad Entropy weight & \multicolumn{2}{c}{$0.2$} \\
\qquad Maximum log std & \multicolumn{2}{c}{$2$} \\
\qquad Minimum log std & \multicolumn{2}{c}{$-20$} \\
\midrule
DARC & \\
\qquad Regularization parameter $\lambda$ & \multicolumn{2}{c}{$0.005$} \\
\qquad Target noise $\bar{\sigma}$ & \multicolumn{2}{c}{$0.2$} \\
\bottomrule
\end{tabular}
\end{table*}

\section{Extensive Structure Comparison of Different Algorithms}

In this section, we compare the structure and specify whether value estimation correction is conducted and regularization strategy is applied among different baseline algorithms, recent methods and our proposed methods, i.e., we provide a checklist on the comparison of different components of different algorithms. The details are available in Table \ref{tab:structure}.

Note that we present a structure table in Table \ref{tab:structurecomparison} of the main text, where the improvement of each algorithm compared with DDPG is calculated by averaging the relative improvement on mean final scores over 5 independent runs with random seed $1$-$5$ on each MuJoCo \cite{todorov2012mujoco} environment, where two decimal places are reserved. To be specific, assume there are $k$ environments and denote the mean final score of DDPG and an algorithm $A$ as $d_k, a_k$ respectively, then the improvement is calculated via:
\begin{equation*}
    \mathrm{improvement} = \dfrac{1}{k}\sum_{i=1}^k \dfrac{a_i-d_i}{d_i}
\end{equation*}

\begin{table*}
\caption{Extensive algorithmic structure and component comparison}
\label{tab:structure}
\centering
\small
\begin{tabular}{c|c|c|c|c}
\toprule
\textbf{Algorithms}  & \textbf{Double Actors} & \textbf{Double Critics}  & \textbf{Value Correction} & \textbf{Regularization} \\
\midrule
DDPG \cite{lillicrap2015continuous}  & \XSolidBrush & \XSolidBrush & \XSolidBrush & \XSolidBrush \\
TD3 \cite{fujimoto2018addressing}  & \XSolidBrush & \CheckmarkBold & \CheckmarkBold & \XSolidBrush \\
SAC \cite{haarnoja2018soft}  & \XSolidBrush & \CheckmarkBold & \CheckmarkBold & \CheckmarkBold \\
TADD \cite{wu2020reducing}  & \XSolidBrush & \XSolidBrush   & \CheckmarkBold & \XSolidBrush \\
SD3 \cite{pan2020softmax}  & \CheckmarkBold & \CheckmarkBold & \CheckmarkBold & \XSolidBrush \\
DADDPG (this work) & \CheckmarkBold & \XSolidBrush  & \CheckmarkBold & \XSolidBrush \\
DATD3 (this work) & \CheckmarkBold & \CheckmarkBold & \CheckmarkBold & \XSolidBrush \\
DARC (this work)  & \CheckmarkBold & \CheckmarkBold & \CheckmarkBold & \CheckmarkBold \\
\bottomrule
\end{tabular}
\end{table*}

\section{Computing Infrastructure}
In this section, we provide a detailed description of the computing infrastructure that we use to run all of the baseline algorithms and experiments in Table \ref{tab:computing}.

\begin{table*}
\caption{Computing Infrastructure}
\label{tab:computing}
\centering
\begin{tabular}{c|c|c}
\toprule
\textbf{CPU}  & \textbf{GPU} & \textbf{Memory} \\
\midrule
AMD EPYC 7452  & RTX3090$\times$8 & 288GB \\
\bottomrule
\end{tabular}
\end{table*}

\end{document}